\theoremstyle{definition}
\newtheorem{definition}{Definition}[section]
\newtheorem{proposition}{Proposition}
\newcommand{\brck}[1]{\left(#1\right)}
\newcommand{\brcksq}[1]{\left[#1\right]}
\def\eqref#1{equation~\ref{#1}}
\def\1{\bm{1}}
\DeclareMathAlphabet{\mathsfit}{\encodingdefault}{\sfdefault}{m}{sl}
\SetMathAlphabet{\mathsfit}{bold}{\encodingdefault}{\sfdefault}{bx}{n}
\def\gH{{\mathcal{H}}}
\DeclareMathOperator*{\argmax}{arg\,max}
\DeclareMathOperator*{\argmin}{arg\,min}
\def\principal{principal}
\def\principals{principals}
\def\subagents{agents}
\mathchardef\mhyphen="2D
\newcommand{\ouralgoshort}{Ours}
\newcommand{\ouralgolong}{our algorithm}
\newcommand{\agentplayer}{agent}
\newcommand{\agentplayers}{agents}
\def\robustagent{principal}
\def\ourpolicy{policy}
\def\otheragents{other agents}
\newcommand{\ouralgo}{our algorithm}
\def\bimatrixactor{gambler}
\def\robustsubscript{0}
\def\AdversarialAgent{Adv}
\def\RiskAverseAgent{RiskAv}
\def\numagents{N}
\def\numplayers{\numagents}
\def\uncertaintyset{\bm{X}}
\def\behavior{\uncertaintyset}
\def\state{s}
\def\action{a}
\def\vaction{\bm{a}}
\def\actionset{A}
\def\actionsetoneton{A_{1:n}}
\def\bfaction{\vaction}
\def\equilibrium{\textrm{EQ}}
\newcommand\pdfover[1]{P(#1)}
\newcommand\prodpdfover[1]{P^\textrm{prod}(#1)}
\def\argmax{\text{argmax}}
\def\game{G}
\def\exputility{\overline{\utility}}
\def\reals{\mathbb{R}}
\def\policy{x}
\def\vpolicy{\bm{x}}
\def\bfstrat{\vpolicy}
\def\MNE{\textrm{MNE}}
\def\CCE{\textrm{CCE}}
\newcommand{\norm}[1]{\lVert #1 \rVert}
\def\specutility{\nu}
\def\decproblem{\Upgamma}
\def\EE{\mathbb{E}}
\def\fixedlambda{\lambda_0}
\def\robustagentreward{\rew_\robustsubscript}
\newcommand{\mbf}[1]{{{\color{blue} $\star$ \textbf{#1}}}}
\newcommand{\mmbf}[1]{{\textbf{#1}}}
\def\rew{r}
\def\util{u}
\def\utility{\util}
\def\endow{x}
\newcommand{\posttax}[1]{\tilde{#1}}
\def\income{z}
\def\labor{l}
\def\isoeta{\eta}
\def\tax{T}
\def\gini{\textit{gini}}
\def\socialwelfare{\textit{swf}}
\def\SWF{\socialwelfare}
\title{Learning to Play General-Sum Games Against Multiple Boundedly Rational Agents}
\author{
Eric Zhao\textsuperscript{\rm 1,2}, Alexander R. Trott\textsuperscript{\rm 1}, Caiming Xiong\textsuperscript{\rm 1}, Stephan Zheng\textsuperscript{\rm 1}
}
\begin{document}

\maketitle

\begin{abstract}
We study the problem of training a \principal{} in a multi-agent general-sum game using reinforcement learning (RL).
Learning a robust \principal{} policy requires anticipating the worst possible strategic responses of other agents, which is generally NP-hard.
However, we show that no-regret dynamics can identify these worst-case responses in poly-time in smooth games.
We propose a framework that uses this policy evaluation method for efficiently learning a robust \principal{} policy using RL.
This framework can be extended to provide robustness to boundedly rational agents too.
Our motivating application is automated mechanism design: we empirically demonstrate our framework learns robust mechanisms in both matrix games and complex spatiotemporal games\footnote{Source code for these experiments are released at \url{https://github.com/salesforce/strategically-robust-ai}.}.
In particular, we learn a dynamic tax policy that improves the welfare of a simulated trade-and-barter economy by 15\%, even when facing previously unseen boundedly rational RL taxpayers.
\end{abstract}

\section{Introduction}
We study the problem of learning a \principal{} policy in a general-sum game against boundedly rational agents.
Learning a robust \principal{} policy requires us to anticipate how these agents may respond to our policy choices, and entails two important challenges (Figure \ref{figure-intro-overview}).
First, the policies we choose induce a sub-game between the other agents, a sub-game which may have infinite equilibria.
The policy we choose should perform well regardless of which equilibria the agents respond with.
Second, \principal{} policies that perform well against rational \agentplayers{} may not generalize to boundedly rational \agentplayers{}, even if they are only infinitesimally irrational \cite{pita_robust_2010}.
Our policy should perform well even if agents act boundedly rational.

\begin{figure}[ht!]
 \includegraphics[width=\linewidth]{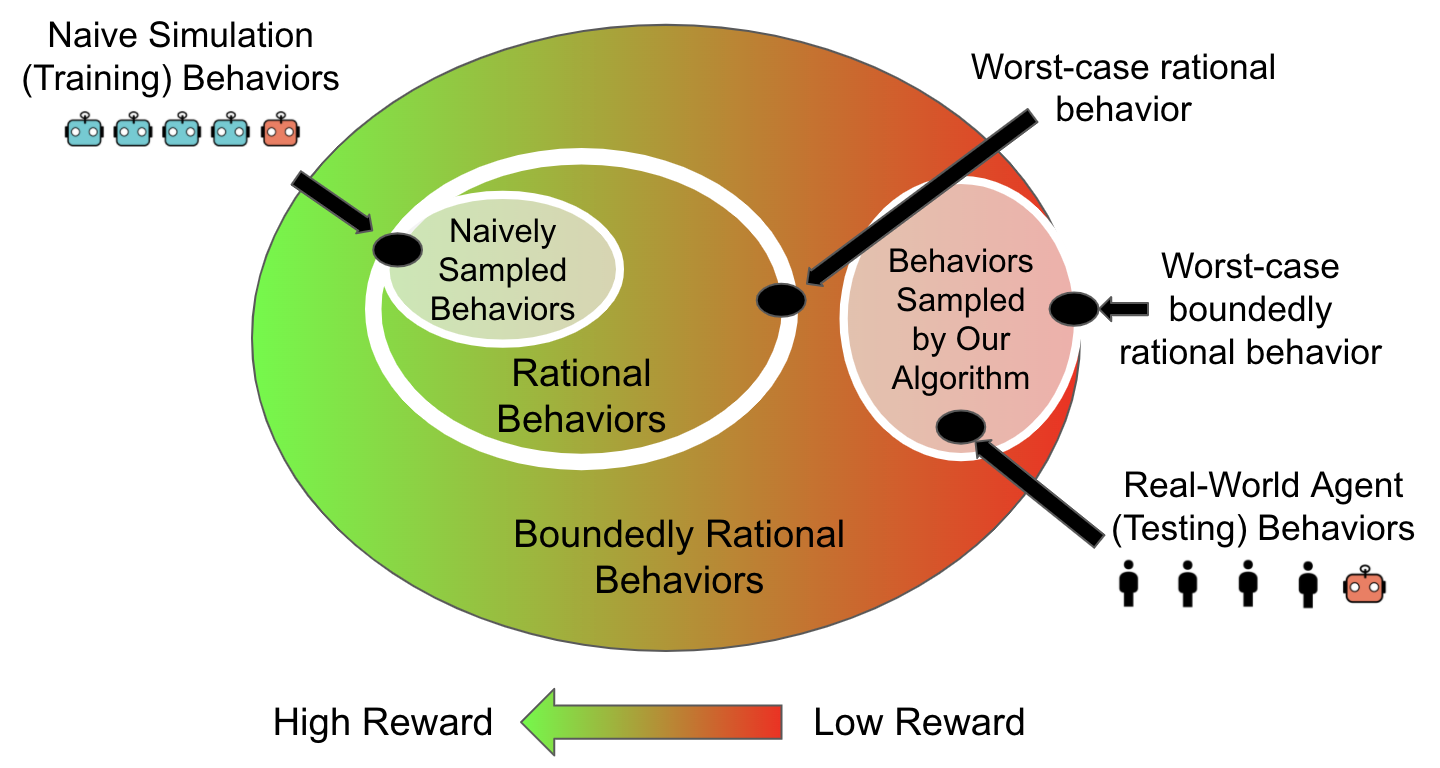} 
 \vspace{-15pt}
    \caption{
    The orange robot denotes our \principal{}, blue robots the \subagents{} we train against, and human icons the \subagents{} we encounter during test time.
    Evaluating a policy in a multi-agent game by naively sampling rational responses from other agents, e.g. via multi-agent RL, may lead to overly optimistic reward estimates.
    We introduce efficient algorithms for adversarially sampling rational responses in smooth games.
    These algorithms can be extended to sample worst-case boundedly rational responses (bottom-right).
    }
    \label{figure-intro-overview}
\end{figure}

We introduce a framework for the reinforcement learning of robust \principal{} policies that address these two challenges.
This framework evaluates a potential policy by identifying the worst-case coarse-correlated equilibrium (CCE) of the sub-game the policy induces.
Although identifying \emph{worst-case} CCE is generally computationally intractable~\cite{papadimitriou_computing_2008,barman_finding_2015}, we prove that worst-case CCE can efficiently learned in smooth games.
Our framework easily extends to identify worst-case approximate CCE.
This allows us to learn \principal{} policies that are robust to boundedly rational \subagents{}, such as agents whose incentives differ from the agents we train against.

Our motivating application is \emph{mechanism design} (``reverse game theory''), where a \principal{} implements the rewards and dynamics (the ``mechanism'') that other agents optimize for \cite{myerson_mechanism_2016}. 
Traditional mechanism design has been limited to problems with a convenient mathematical structure, e.g., simple auctions, where the equilibria behavior of agents can be solved in closed-form.
Recent research have pursued computational approaches to mechanism design that evaluate potential mechanisms using agent-based modeling \cite{holland_artificial_1991,bonabeau_agent-based_2002,dutting_optimal_2019} and multi-agent reinforcement learning (MARL) \cite{zheng_ai_2020}.
This application of multi-agent learning remains an exciting but understudied topic.
We will outline a modern perspective that formalizes automated mechanism design and its robustness concerns as an equilibrium selection problem.

\paragraph{Summary of results.} 
\begin{enumerate}
    \item \emph{We motivate a robust learning objective} for finding a principal policy that is robust to \subagents{} of differing incentives, bounded rationality, and reputation. This objective is a multi-follower extension of robust Stackelberg games.
    \item \emph{We show the existence of poly-time algorithms} for adversarially sampling the coarse-correlated-equilibria (\CCE{}) of smooth games, proving that multi-follower Stackelberg games can be tractable.
    This weakens prior findings that learning welfare-maximizing CCE is NP-hard \cite{papadimitriou_computing_2008}.
    \item \emph{We apply our proposed framework to automated mechanism design} problems where multi-agent RL is used to simulate the outcomes of mechanisms. In the spatiotemporal economic simulations used by the AI Economist \cite{zheng_ai_2020}, our framework learns robust tax policies that improve welfare by up to 15\% and are robust to previously unseen and boundedly rational agents.
\end{enumerate}

\section{Related Work}
\paragraph{Finding Equilibria.} 
A goal of multi-agent learning is finding equilibria (or more generally \emph{solution concepts}), i.e., sets of \agentplayer{} policies that are game-theoretically optimal (according to the definition of equilibrium). 
Prior work has used gradient-based methods, e.g., deep reinforcement learning, to find (approximate) equilibria with great success in multiplayer games such as Diplomacy \cite{gray_human-level_2021} and in training Generative Adversarial Networks \cite{schafer_implicit_2020}.
However, learning \emph{robust} \principals{} (or mechanisms) in multi-agent general-sum games is an open problem: it requires evaluating strategies against worst-case sub-game equilibria, which is computationally hard.

\paragraph{Stackelberg Games.}
Our \robustagent{} can be seen as a Stackelberg \emph{leader}; the other agents as Stackelberg \emph{followers}.
Stackelberg games have had real-world success in security games used in airports \citep{pita_using_2009} and anti-poaching defense \citep{nguyen_capture_2016}, for example.
However, Stackelberg analysis is typically limited to a single rational follower or assuming that the followers do not strategically interact, e.g., as in multi-follower security games \citep{korzhyk_security_2011}.
In contrast, we consider more general settings with multiple followers who may strategically interact with one another. 
In these settings, finding multi-follower ``best-responses'' is a computationally hard equilibria search and an open problem \cite{barman_finding_2015, basilico_bilevel_2020} for which our work provides a tractable perspective.

Our approach to modeling uncertainty about followers is inspired by \emph{Strict Uncertainty Stackelberg Games} that assume a worst-case choice of a follower's utility function \cite{pita_robust_2010,pita_robust_2012,kiekintveld_security_2013}.
Similarly, \emph{Bayesian Stackelberg Games} assume a prior over a space of possible follower utility functions \cite{conitzer_computing_2006}.
We extend this by considering a general setting with multiple, possibly interacting, followers.

\paragraph{Robustness.}
Our work is also related in spirit to prior work on robust reinforcement learning, which we discuss further in Appendix~\ref{app:relatedwork}.
We will refer to our notion of robustness as \emph{strategic robustness} to contrast it from non-game-theoretic notions of robustness to, for example, noisy observations \cite{morimoto_robust_2001}.
Strategic robustness also differs from the topic of ``robust game theory'', which studies the equilibria that arise when all players act robustly to some uncertainty about game parameters \cite{aghassi_robust_2006}.
Hereafter, we will simply refer to our notion of strategic robustness as ``robustness'' for brevity.

\section{Problem Formulation}
\paragraph{Notation.}
Bold variables are vectors of size $n$, with each component corresponding to an \agentplayer{} $i = 1, \dots, \numplayers$.
For example, $\bfaction \in \actionsetoneton$ denotes an action vector over all \agentplayers{} except our principal, \agentplayer{} $0$. $\bfaction_{-i}$ denotes the profile of actions chosen by all \agentplayers{} except \agentplayers{} $0$ and $i$.

\paragraph{Setup.}
We consider a general-sum game $\game$ with $\numplayers{} + 1$ agents. 
Our principal, or ``ego agent'', is index $i=0$ and the \otheragents{}{} are $i=1, \ldots, \numplayers$.
$\actionset_i$ denotes the set of $m_i$ actions available to \agentplayer{} $i$, and $m = \sum_{i=1}^\numplayers m_i$. 
$\pdfover{\actionset_i}$ is the set of probability distributions over action set $\actionset_i$.
The \emph{joint action set} is $\actionsetoneton := \prod_{i \in [1, \dots, \numplayers]} \actionset_i$.
$\pdfover{\actionsetoneton}$ denotes the set of joint distributions over strategy profiles $\actionsetoneton$.
$\prodpdfover{\actionsetoneton}$ denotes the set of product distributions over strategy profiles $\actionsetoneton$. 
Every \agentplayer{} $i = 0, \dots, \numplayers$ has a utility function $\utility_i: \actionset_0 \times \actionsetoneton \rightarrow \reals$ with bounded payoffs.
For example, $\utility_i(\action_0, \bfaction)$ denotes the utility of \agentplayer{} $i$ under action $\action_0 \in \actionset_0$ by our principal and actions $\bfaction \in \actionset$ by \agentplayers{} $1, \dots, \numplayers$.
When $\action_0$ is clear from context, we'll write $\utility_i(\bfaction)$, suppressing $\action_0$.
We denote expected utility as $\exputility_i(\policy_0, \vpolicy) := \EE_{\action_0 \sim \policy_0, \bfaction \sim \bfstrat} \utility_i(\action_0, \bfaction)$; again we write $\exputility_i(\bfstrat)$ when $\policy_0$ is clear from context.

\paragraph{Succinct Games.}
To derive our complexity results, we will use standard assumptions on our game $\game$ so that working with equilibria is not trivially hard \cite{papadimitriou_computing_2008}.
We assume $G := (I, T, U)$ is a succinct game, i.e., has a polynomial-size string representation. Here, $I$ are efficiently recognizable inputs, and $T$ and $U$ are polynomial algorithms. $T$ returns $n, m_0, \dots, m_\numplayers$ given inputs $z \in I$, while $U$ specifies the utility function of each agent.
We assume $\game$ is of \emph{polynomial type}, i.e., $n, m_0, \dots, m_\numplayers$ are polynomial bounded in $|I|$. 
We assume that $\game$ satisfies the \emph{polynomial expectation property}, i.e., utilities $\overline{u}(\policy_0, \vpolicy)$ can be computed in polynomial time for product distributions $\bfstrat$. 
The latter assumption is known to hold for virtually all succinct games of polynomial type \cite{papadimitriou_computing_2008}.
Without these assumptions, simply evaluating the payoff of a coarse-correlated equilibria can require exponential time.
All complexity results in our work, including Theorem \ref{thm:mainthm} and cited results from prior works use these assumptions.
Later, we will use additional ``smoothness'' conditions on $\game$ to overcome prior hardness results about succinct games.

\paragraph{Problem.}
Given a game $\game$, we aim to learn a principal policy $\policy_0 \in \pdfover{\actionset_i}$ that maximizes our principal's expected utility $\exputility_0(\policy_0, \bfstrat)$.
Here, $\bfstrat$ is the strategy that agents in a test environment respond to our policy $\policy_0$ with.
We will assume access to a training environment for $\game$.
In this training environment, we assume access to---potentially inaccurate---estimates of the reward functions of the agents; we will write these estimates as $\utility_1, \dots, \utility_\numplayers$.
For example, the \robustagent{} $i=0$ may be a policymaker setting a tax policy $\policy_0$ that maximizes social welfare $\utility_0$.
In response, the tax-payers play $\bfstrat$, choosing whether to work and report income.

\paragraph{Objective}
In order to formalize our robust learning objective, we must define what uncertainty sets $\behavior$ we want learning guarantees for.
A behavioral uncertainty set $\behavior(\policy_0) \subseteq \pdfover{\actionset}$ defines the strategies that the test environment agents may respond to a principal policy $\policy_0$ with.
For simple agent behaviors, one can use imitation learning or domain knowledge to construct these uncertainty sets.
In this work, we will study game-theoretic uncertainty sets, for example, where $\behavior(\policy_0)$ is the set of rational behaviors (see Section~\ref{sec:theory}).
Fixing a choice of $\uncertaintyset$, we can write our robust learning objective as:
\begin{align}
    \label{eq:nested_opt}
    \max_{\policy_0} \min_{\vpolicy \in \uncertaintyset(\policy_0)} \exputility_0(\policy_0, \bfstrat).
\end{align}
This is a challenging objective: it features nested optimization and requires searching over behavioral uncertainty sets, which is a non-trivial task in complicated games.

\section{Finding Uncertainty Sets for Boundedly Rational Agents}
\label{sec:theory}
A key challenge in our problem formulation is defining our behavioral uncertainty set $\uncertaintyset$.
In this section, we will first argue for a uncertainty set of \emph{coarse-correlated equilibria (\CCE{})}.
We will then prove that in \emph{smooth games} we can efficiently approximate worst-case \CCE{}.
We will finally propose a more relaxed uncertainty set that yields robustness to boundedly rational agents.

\paragraph{Initial Assumptions}
Our guiding principle for choosing $\uncertaintyset$ is that a robust principal can assume that agents are rational, but should still perform if agents act somewhat irrationally or have incentives that slightly differ from anticipated.
We will further assume the following and relax them later.
\begin{enumerate}[nosep]
    \item The incentives (reward functions) of the test agents exactly agree with our training environment's estimates. %
    \item All agents are rational expected-utility maximizers. No agent will settle if they want to unilaterally deviate.
    \item We, the principal, can commit to a strategy $\policy_0$ and will not react to other \agentplayer{}s.
\end{enumerate}

\paragraph{Dominant Strategies.} 
A natural choice of $\uncertaintyset$ is to extend Stackelberg equilibria to a multi-follower setting and define $\uncertaintyset(\policy_0)$ as the set of best responses to $\policy_0$,
\begin{align}
    \label{eq:stack_br}
    \uncertaintyset(\policy_0) = \{ \bfstrat \mid \forall i \in [1, \dots, n], \tilde{\bfstrat}_{-i} \in \pdfover{\actionset}_{-i}: \nonumber \\
    x_i \in \argmax_{x_i} \exputility_i(\policy_0, x_i, \tilde{\bfstrat}_{-i}) \}.
\end{align}
However, this set is only non-empty when all agents have dominant strategies: a strong assumption that rarely holds when followers interact with one another.

\paragraph{Stability-Based Equilibria.} 
We can also define an uncertainty set $\uncertaintyset(\policy_0)$ as the stable equilibria that agents may converge to under our policy $\policy_0$.
This coincides with Equation \ref{eq:stack_br} when it is non-empty.
Formally, let
\begin{align*}
    \uncertaintyset(\policy_0) = \{ \bfstrat \in \pdfover{\actionset} \mid (\policy_0, \bfstrat) \in \equilibrium \}, 
\end{align*}
where natural choices for $\equilibrium$ include mixed Nash equilibria (MNE) in which agents do not coordinate:
\begin{align*}
    \MNE = \{& \bfstrat \in \prodpdfover{\actionset} \mid  \forall i \in [1,\dots,\numplayers], \tilde{\action}_i \in \actionset_i: \nonumber \\ & \EE_{\bfaction \sim \bfstrat}[\utility_i(\vaction)] \geq \EE_{\bfaction_{-i} \sim \bfstrat_{-i}}[\utility_i(\tilde{\action}_i, \vaction_{-i})] \},
\end{align*}
or more general coarse-correlated equilibria (CCE):
\begin{align*}
    \CCE = \{ & \bfstrat \in \pdfover{\actionset} \mid   \forall i \in [1, \dots, \numplayers], \tilde{a}_i \in \actionset_i: \nonumber \\ & \EE_{\bfaction \sim \bfstrat}[\utility_i(\bfstrat)] \geq \EE_{\bfaction_{-i} \sim \bfstrat_{-i}}[\utility_i(\tilde{a}_i, \bfstrat_{-i})] \}.
\end{align*}
Here, coarse-correlated equilibria describe more general joint strategies, such as coordination based on shared information.

\paragraph{Computational Hardness.}
Unfortunately, optimizing the robustness objective in Equation \ref{eq:nested_opt} is neither tractable with \MNE{} nor \CCE{}. 
Finding the \MNE{}/\CCE{} that minimizes a utility function $\utility_0$ is equivalent to the NP-hard problem of finding a \MNE{}/\CCE{} that maximizes a linear social welfare objective $\specutility$ \cite{daskalakis_complexity_2009,papadimitriou_computing_2008}; we will set $\specutility = - \utility_0$ for convenience.
Beyond maximizing $\specutility$, simply finding a CCE that does not minimize $\specutility$ is NP-hard \cite{barman_finding_2015}.
Formally, consider the decision problem $\decproblem$ of determining whether a game $G$ (under our assumptions) admits a CCE $\bfstrat$ such that the expectation of $\specutility$, $\overline{\specutility}$, satisfies:
\begin{align*}
    \overline{\specutility}(\bfstrat) > \min_{\tilde{\bfstrat} \in \CCE} \overline{\specutility}(\tilde{\bfstrat}).
\end{align*}
This problem is NP-hard for some choices of $\specutility$, including the social welfare function \cite{barman_finding_2015}.
For our purposes, this implies that even sampling an approximately worst-case equilibria is intractable.
This means it could be impossible to efficiently evaluate our \robustagent{}'s \ourpolicy{} as it is intractable to guarantee sampling anything other than uninformative equilibria behavior.

\subsection{Smooth Games and Tractable Uncertainty Sets}
Smooth games offer a workaround to this hardness result.

\begin{definition}[Smooth Games]
\label{def:poa}
    A \emph{cost-minimization game} with cost functions $c_i$ and objective $C$ is $(\lambda, \mu)$-smooth if, for all strategies $\bfstrat, \bfstrat^* \in \pdfover{\actionset}$,
    \begin{align*}
        \sum_{i=1}^\numplayers \EE[c_i(x_i^*, \bfstrat_{-i})] \leq \lambda  \cdot \EE[C(\bfstrat^*)] + \mu \cdot  \EE[C(\bfstrat)].
    \end{align*}
    The ``robust price of anarchy'' (RPOA) is defined $\rho := \frac{\lambda}{1 - \mu}$.
\end{definition}

In fact, we can sample a $\CCE$ that approximately maximizes $\specutility= -\utility_0$ with run-time polynomial in the game size, smoothness $(\lambda_\game, \mu_\game)$ of the original game $\game$ and the smoothness $(\lambda_{\tilde{\game}}, \mu_{\tilde{\game}})$ of a modified game $\tilde{\game}$. Here, $\tilde{\game}$ is identical to $\game$ except each agent's utility is changed from $\utility_i$ to $\specutility$.

\begin{restatable}{theorem}{mainthm}
\label{thm:mainthm}
For succinct $n$-agent $m$-action games of polynomial type and expectation property, there exists a Poly($1/ \epsilon, n, m, \rho$) algorithm that will find an $\epsilon$-CCE $\bfstrat$ with
\begin{align*}
    \overline{\specutility}(\bfstrat) \geq \frac{y}{\rho}  - \epsilon.
\end{align*}
for any $y \leq \max_{\bfstrat^* \in \CCE} \overline{\specutility}(\bfstrat^*)$, where $\rho = \frac{\lambda_{\tilde{\game}}}{1 - \max\{ \mu_\game, \mu_{\tilde{\game}}\}}$.
\end{restatable}

\begin{proof}[Proof Sketch of Theorem \ref{thm:mainthm}]
First, we observe that the problem of finding a $\specutility$-maximizing $\CCE$ reduces to finding a halfspace oracle that optimizes some modified social welfare (Lemma~\ref{lemma:blackwell}).
Similar reductions have been described by \cite{jiang_general_2011} and \cite{barman_finding_2015}. 

\begin{restatable}{lemma}{blackwell}
\label{lemma:blackwell}
Fix a $0 < y \leq \max_{\bfstrat \in \CCE} \overline{\specutility}(\bfstrat)$.
Assume there is a Poly($1/\epsilon, n, m$)-time halfspace oracle that, given a vector $\beta \in \reals_+^{1 + m}$ with non-negative components, returns an $\bfstrat \in \pdfover{\actionset}$ such that $\beta v(\bfstrat) \leq 0$, where
    \begin{align}
    \label{eq:v}
        v(\bfstrat) &=\begin{bmatrix}
    \exputility_1(1, \bfstrat_{-1}) - \exputility_1(\bfstrat)  \\
    \vdots \\
    \exputility_1(m_1, \bfstrat_{-1}) - \exputility_1(\bfstrat)  \\
    \vdots \\
    \exputility_\numplayers(1, \bfstrat_{-\numplayers}) - \exputility_\numplayers(\bfstrat)  \\
    \vdots \\
     \exputility_\numplayers(m_\numplayers, \bfstrat_{-\numplayers}) - \exputility_\numplayers(\bfstrat)\\
    y - \overline{\specutility}(\bfstrat)
    \end{bmatrix}
    \end{align}
Then, there is a Poly($1 / \epsilon, n, m$)-time algorithm that returns an $\epsilon$-CCE $\bfstrat$ with $\overline{\specutility}(\bfstrat) \geq y - \epsilon$.
\end{restatable}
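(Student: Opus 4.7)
The plan is to cast the problem as a Blackwell approachability instance whose target set is the non-positive orthant $\reals_-^{1+m}$, then invoke a standard approachability-to-no-regret reduction. The vector $v(\bfstrat)$ is designed precisely so that driving its time-averaged value into $\reals_-^{1+m}$ simultaneously enforces (i) the CCE no-regret conditions for every agent $i$ and deviation action $j \in \actionset_i$, and (ii) the target-value constraint $\overline{\specutility}(\bfstrat) \geq y$. Thus if we can produce a sequence of strategies $\bfstrat^{(1)}, \ldots, \bfstrat^{(T)}$ such that $\tfrac{1}{T}\sum_t v(\bfstrat^{(t)}) \in \reals_-^{1+m} + B(0, \epsilon)$, then the averaged strategy $\bar{\bfstrat} = \tfrac{1}{T}\sum_t \bfstrat^{(t)}$ is an $\epsilon$-CCE with $\overline{\specutility}(\bar{\bfstrat}) \geq y - \epsilon$ (using linearity of $\exputility_i(j, \cdot)$ and $\overline{\specutility}$ in the averaged joint distribution).

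First, I would verify Blackwell's approachability condition: for every separating halfspace of $\reals_-^{1+m}$, i.e., every non-negative weight vector $\beta \in \reals_+^{1+m}$, there must exist a response $\bfstrat$ with $\beta \cdot v(\bfstrat) \leq 0$. This is exactly what the hypothesized halfspace oracle provides. The non-emptiness of such a response is consistent with our assumption $y \leq \max_{\bfstrat \in \CCE} \overline{\specutility}(\bfstrat)$, since a $\specutility$-maximizing CCE attaining $y$ witnesses $v(\bfstrat) \in \reals_-^{1+m}$, making $\beta \cdot v(\bfstrat) \leq 0$ for every $\beta \geq 0$.

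Second, I would run the standard Blackwell approachability algorithm: at each round $t$, compute the direction $\beta^{(t)}$ from the current cumulative vector $\sum_{s<t} v(\bfstrat^{(s)})$ to its projection onto $\reals_-^{1+m}$, query the halfspace oracle at $\beta^{(t)}$, receive $\bfstrat^{(t)}$, and update. Since payoffs are bounded, each coordinate of $v$ is bounded, so Blackwell's theorem (or equivalently the regret-matching / Hart--Mas-Colell analysis) yields a $O(1/\sqrt{T})$ rate of convergence of the distance of the average of $v$ to the non-positive orthant. Setting $T = \mathrm{Poly}(1/\epsilon, n, m)$ ensures the averaged vector is within $\epsilon$ of $\reals_-^{1+m}$ in every coordinate. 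Combined with polynomial-time oracle calls and the polynomial expectation property (so that $v(\bfstrat^{(t)})$ can be computed in polynomial time), the total runtime is $\mathrm{Poly}(1/\epsilon, n, m)$.

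The main obstacle I anticipate is the bookkeeping that turns coordinate-wise $\epsilon$-closeness of the averaged $v$ to both the per-agent regret guarantee defining an $\epsilon$-CCE and the guarantee $\overline{\specutility}(\bar{\bfstrat}) \geq y - \epsilon$. Concretely, I must check that the averaged regret coordinate $\tfrac{1}{T}\sum_t [\exputility_i(j, \bfstrat^{(t)}_{-i}) - \exputility_i(\bfstrat^{(t)})]$ equals the deviation gain $\exputility_i(j, \bar{\bfstrat}_{-i}) - \exputility_i(\bar{\bfstrat})$ with respect to the averaged joint distribution $\bar{\bfstrat}$; this requires that $\bar{\bfstrat}$ be interpreted as the mixture over the sequence of joint distributions (not a product of marginal averages), which is where CCE's flexibility over MNE becomes essential. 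A secondary subtlety is normalizing $\beta$ to control the approachability rate constants, but this is standard and does not affect the polynomial scaling.
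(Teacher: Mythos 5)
Your proposal is correct and takes essentially the same route as the paper's proof: both cast the problem as Blackwell approachability of the non-positive orthant, use the existence of a $\specutility$-maximizing CCE to certify approachability, generate $\beta_t$ by projecting the running average of $v$ onto the orthant, and invoke the standard $O(1/\sqrt{T})$ distance decay to get Poly($1/\epsilon, n, m$) iterations. The bookkeeping point you flag---that the returned strategy must be the mixture over the iterates' joint distributions, which is exactly why CCE (rather than MNE) is the right solution concept here---is also how the paper handles it.
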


The halfspace oracle's optimization task can be reduced to optimizing social welfare in a game with a RPOA of $\rho$, which inherits its smoothness from games $\game$ and $\tilde{\game}$.
This upper bounds the ratio between the smallest objective value $\specutility$ of a CCE and the largest objective value $\specutility$ of any strategy. 
Thus, we can use no-regret dynamics \cite{cesa-bianchi_prediction_2006, FOSTER199740,hart} in this smooth game to approximate the half-space oracle.

\begin{restatable}{lemma}{halfspace}
\label{lemma:halfspace}
Let $\rho$ denote an upper bound on the price of anarchy of $\game, \tilde{\game}$.
There exists a Poly($1 / \epsilon, n, m, \rho$)-time halfspace oracle that, given a vector $\beta \in \reals_+^{1 + m}$ and $y \leq \rho \max_{\bfstrat \in \CCE}  \overline{\nu}(\bfstrat)$, returns an $\bfstrat \in \pdfover{\actionset}$ such that $\beta v(\bfstrat) \leq \epsilon$ where $v$ is defined as in Equation \ref{eq:v}.
\end{restatable}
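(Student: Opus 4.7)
The plan is to build the halfspace oracle by simulating no-regret dynamics in $\game$ (coupled with the auxiliary cost game $\tilde\game$), converting per-agent regret bounds into the target halfspace inequality via the robust POA of both games.

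First, I would unpack the condition into its natural components. Writing $\beta = (\beta_0, \{\beta_{ij}\})$ with $\beta_0$ denoting the welfare-gap coordinate,
\begin{align*}
\beta v(\bfstrat) = \sum_{i,j} \beta_{ij}\bigl(\exputility_i(j, \bfstrat_{-i}) - \exputility_i(\bfstrat)\bigr) + \beta_0 \bigl(y - \overline{\specutility}(\bfstrat)\bigr).
\end{align*}
By positive homogeneity, rescale so $\|\beta\|_1 = 1$. Setting $\hat\beta_i := \sum_j \beta_{ij}$ and $q_i(j) := \beta_{ij}/\hat\beta_i$ (degenerate $\hat\beta_i = 0$ handled trivially), the deviation part collapses to $\sum_i \hat\beta_i [\exputility_i(q_i, \bfstrat_{-i}) - \exputility_i(\bfstrat)]$, bounded above by $\sum_i \hat\beta_i \cdot \mathrm{Reg}_i(\bfstrat)$ for the standard external regret $\mathrm{Reg}_i$. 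So controlling $\beta v(\bfstrat)$ decomposes into (i) per-agent regret control and (ii) closing the welfare gap.

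Next I would have each agent $i$ run a Hedge-style no-regret learner in $\game$ for $T = \mathrm{poly}(m, n, 1/\epsilon, \rho)$ rounds. The standard regret bound yields an empirical profile $\bar\bfstrat$ with $\mathrm{Reg}_i(\bar\bfstrat) = O(\sqrt{\log m_i / T})$; tuning $T$ so this is $\leq \epsilon/2$ bounds the weighted deviation sum by $\epsilon/2 \cdot \|\beta\|_1 = \epsilon/2$. Each round is polynomial-time by the polynomial expectation property of $\game$, so the aggregate cost stays polynomial.

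The main obstacle, and the step that genuinely uses smoothness, is bounding the welfare-gap $\beta_0 (y - \overline{\specutility}(\bar\bfstrat))$. Here I would invoke Roughgarden's intrinsic-robustness theorem: in any $\rho$-smooth game, the no-regret average $\bar\bfstrat$ satisfies a welfare lower bound of the form $\overline{\specutility}(\bar\bfstrat) \geq \overline{\specutility}(\bfstrat^{\mathrm{OPT}})/\rho - O(\sqrt{\log m / T})$. The delicate part is closing the gap between what raw POA of $\game$ yields and the condition $y \leq \rho \max_{\bfstrat \in \CCE} \overline{\specutility}(\bfstrat)$---this is exactly the role of the auxiliary cost game $\tilde\game$. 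By running analogous no-regret dynamics in $\tilde\game$ (where every agent's utility is $-\specutility$, aligning their incentives with the welfare objective) and setting $\rho := \max\{\rho_\game, \rho_{\tilde\game}\}$, the POA of $\tilde\game$ transfers the welfare-maximizing CCE region across games, yielding $\overline{\specutility}(\bar\bfstrat) \geq y - \epsilon/2$ after tuning $T$. Combining both contributions gives $\beta v(\bar\bfstrat) \leq \epsilon$, with total runtime $\mathrm{poly}(1/\epsilon, n, m, \rho)$ as required.
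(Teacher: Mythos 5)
Your overall framing---reduce $\beta v(\bfstrat)\le\epsilon$ to a weighted sum of per-agent regret terms plus a welfare-gap term, and use no-regret dynamics together with smoothness---is the right family of ideas, and your first decomposition step matches the paper's. But there is a genuine gap in the middle of the argument: you control the two pieces with \emph{two different strategy profiles}. Running Hedge in $\game$ (each agent optimizing its own $\utility_i$) produces an approximate CCE of $\game$ whose regret terms are small, but the robust price-of-anarchy theorem only lower-bounds the \emph{social welfare of the game actually being played}; it says nothing about an arbitrary linear functional $\specutility$. So the profile from that run can land on the CCE with minimal $\overline{\specutility}$, making the welfare-gap term as large as $\beta_{m+1}\bigl(y-\min_{\bfstrat\in\CCE}\overline{\specutility}(\bfstrat)\bigr)$, which is exactly the quantity the hardness results say you cannot control. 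Conversely, the profile from a separate run in $\tilde{\game}$ has high $\overline{\specutility}$ but no regret guarantee with respect to the original $\utility_i$. The lemma needs a \emph{single} $\bfstrat$ satisfying the combined inequality, and "transferring the welfare-maximizing CCE region across games" does not produce one; that sentence is where the proof breaks.

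The paper's fix is to fold both requirements into one auxiliary cost-minimization game: agent $i$ is assigned the cost
\begin{align*}
c_i(\bfstrat) \;=\; -\frac{\beta_{m+1}\,\overline{\specutility}(\bfstrat)}{n} \;-\; \norm{\beta_i}_1\left(\overline{\utility}_i(\bfstrat)-\overline{\utility}_i\!\left(\tfrac{\beta_i}{\norm{\beta_i}_1},\bfstrat_{-i}\right)\right),
\end{align*}
so that $\sum_i c_i(\bfstrat)$ \emph{is} $\beta v(\bfstrat)-\beta_{m+1}y$. No-regret dynamics are then run once, in this constructed game; summing the per-agent no-regret inequalities and applying $(\lambda,\mu)$-smoothness of the constructed game gives $C(\bfstrat)\le\lambda C(\bfstrat^*)+\mu C(\bfstrat)$, and instantiating $\bfstrat^*$ as the $\specutility$-maximizing CCE (whose regret terms are nonpositive) yields $C(\bfstrat)\le-\frac{\lambda}{1-\mu}\max_{\bfstrat\in\CCE}\overline{\specutility}(\bfstrat)$, whence the hypothesis on $y$ closes the halfspace condition. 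The smoothness of the constructed game is then bounded by that of $\game$ and $\tilde{\game}$ because the first cost term is $\tilde{\game}$'s cost and the fixed-deviation term is smoothness-neutral; this is where $\rho=\max\{\rho_\game,\rho_{\tilde\game}\}$ actually enters. To repair your argument you would need to replace your two separate Hedge runs with this single combined-cost run (or an equivalent single Lagrangian-weighted objective).
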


Combining Lemmas \ref{lemma:blackwell} and \ref{lemma:halfspace} constructs our algorithm.
\end{proof}

Informally, this theorem states that it is tractable to find a CCE that maximizes $\specutility$ up to the price of anarchy.
While this relationship is immediate when $\specutility$ is the social welfare function \cite{roughgarden_intrinsic_2015}, our result shows that we can prove a similar relationship concerning the optimization of CCE against any linear function.
This positive result allows for translation between well-known price-of-anarchy bounds and bounds on the tractability of CCE optimization. %
\begin{restatable}{corollary}{strongp}
\label{corollary:strongp}
In linear congestion games, for any linear function $\specutility$, we can find, in Poly($1/\epsilon, n, m$) time, an $\epsilon$-CCE $\bfstrat$ such that
$
    \overline{\specutility}(\bfstrat) \geq 0.4 \cdot  \max_{\bfstrat \in \CCE} \overline{\specutility} (\bfstrat) - \epsilon
$.
\end{restatable}

While \citet{barman_finding_2015} showed the decision problem $\decproblem$ of finding a non-trivial CCE is NP-hard in general, our Theorem \ref{thm:mainthm} also shows it is tractable in smooth games. %
\begin{restatable}{corollary}{inp}
\label{corollary:inp}
The decision-problem $\decproblem$ is in P for games where $\frac{1}{\rho} \max_{\bfstrat \in \CCE} \overline{\specutility}(\bfstrat) > \min_{\bfstrat \in \CCE} \overline{\specutility}(\bfstrat)$. %
\end{restatable}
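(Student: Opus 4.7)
The plan is to reduce directly to Theorem \ref{thm:mainthm}. The hypothesis provides a strictly positive gap $\gamma := \tfrac{1}{\rho}\max_{\bfstrat \in \CCE}\overline{\specutility}(\bfstrat) - \min_{\bfstrat \in \CCE}\overline{\specutility}(\bfstrat) > 0$. Since the robust price of anarchy satisfies $\rho \ge 1$ by definition, this immediately gives the stronger inequality $\max > \min$, so a CCE strictly better than the worst-case CCE exists; hence the answer to $\decproblem$ is YES on the entire class of games satisfying the hypothesis, and the trivial algorithm ``output YES'' already places $\decproblem$ in P on this class.

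To turn this observation into a constructive polynomial-time algorithm that also exhibits a witness, I would set $\epsilon := \gamma/2$ and invoke Theorem \ref{thm:mainthm}, obtaining in time $\mathrm{Poly}(1/\epsilon, n, m, \rho)$ a strategy $\bfstrat^\star$ with
\[
\overline{\specutility}(\bfstrat^\star) \;\ge\; \tfrac{1}{\rho}\max_{\bfstrat \in \CCE}\overline{\specutility}(\bfstrat) - \epsilon \;=\; \min_{\bfstrat \in \CCE}\overline{\specutility}(\bfstrat) + \epsilon \;>\; \min_{\bfstrat \in \CCE}\overline{\specutility}(\bfstrat),
\]
so $\bfstrat^\star$ is an efficiently computable certificate of the affirmative answer.

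The main obstacle is twofold. First, $\bfstrat^\star$ is only an $\epsilon$-CCE, whereas $\decproblem$ as stated in Equation \ref{eq:decproblem} is phrased over exact CCE; I would resolve this with a convexity argument, namely that $\CCE$ is a non-empty convex polytope on which the linear functional $\overline{\specutility}$ sweeps the full interval $[\min,\max]$, so the strict inequality $\max > \min$ established above already guarantees the existence of an exact CCE with value strictly above $\min$, with $\bfstrat^\star$ merely playing the role of an efficiently producible surrogate certificate. Second, the choice $\epsilon := \gamma/2$ implicitly depends on $\gamma$, which is not given a priori; for the overall runtime to be polynomial in the game description $|I|$, I would adopt the natural reading that the hypothesis class is specified with $\gamma \ge 1/\mathrm{poly}(|I|)$, or else perform a doubling search on $\epsilon$ (halving from $\epsilon = 1$) combined with a symmetric application of Theorem \ref{thm:mainthm} to $-\specutility$ in order to obtain an efficiently computable upper bound on $\min$ against which candidate witnesses can be compared.
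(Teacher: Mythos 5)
The paper never spells out a proof of this corollary; it is presented as an immediate consequence of Theorem \ref{thm:mainthm} (run the algorithm with $\epsilon$ below the gap, and the returned strategy certifies a YES answer). Your proposal follows that same reduction, but patches two holes that the one-line argument actually has: (i) Theorem \ref{thm:mainthm} only returns an $\epsilon$-CCE, which is not a literal witness for $\decproblem$ as written over exact CCE, and your compactness/convexity argument correctly converts the value gap into the existence of an exact witness; (ii) the gap $\gamma$ is not known a priori, which you handle via the promise reading or a search. Your sharper observation --- that on the promise class the answer is always YES, so ``output YES'' decides $\decproblem$ in constant time without any appeal to Theorem \ref{thm:mainthm} --- is a more elementary route the paper does not take, and it exposes that the corollary is nearly vacuous as a pure decision statement (its real content is the constructive certificate). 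One caveat applies to both of your branches: the step from $\frac{1}{\rho} \max_{\bfstrat \in \CCE} \overline{\specutility}(\bfstrat) > \min_{\bfstrat \in \CCE} \overline{\specutility}(\bfstrat)$ to $\max > \min$ uses $\frac{1}{\rho}\max \le \max$, which for $\rho \ge 1$ requires $\max_{\bfstrat \in \CCE} \overline{\specutility}(\bfstrat) \ge 0$; if the maximum is negative the inequality reverses, the hypothesis can hold while $\overline{\specutility}$ is constant on $\CCE$, and the true answer is NO. The paper never states this non-negativity assumption, though it is implicit in its price-of-anarchy framework and in Corollary \ref{corollary:strongp}; you should make it explicit rather than asserting that $\rho \ge 1$ ``immediately'' gives the stronger inequality.
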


\paragraph{Remarks.}
These theoretical conclusions suggest that although using equilibria-based uncertainty sets may be intractable in some cases, there is a broad class of common problems where CCE uncertainty sets are reasonable and allow for efficient adversarial sampling.
The algorithm we construct in Theorem~\ref{thm:mainthm} also enjoys two nice properties.
First, it only requires oracle access to utility functions (efficient under polynomial expectation property).
Second, in the algorithm's self-play subprocedures, each agent can be trained using only their, and their principal's, utility information. %

\subsection{Weakening Assumptions on Agents and Principal Robustness Algorithm}
We now further refine our choice of uncertainty set to ensure generalization to agents that violate our behavioral assumptions.
We now switch to weaker assumptions:
\begin{enumerate}
    \item \textbf{Subjective rationality:} At test time, an agent's utility $\tilde{\utility}_i$ may differ from the anticipated utility $\utility_i$ \cite{simon_substantive_1976}.
    Many models of subjective rationality, such as Subjective Utility Quantal Response \cite{nguyen_analyzing_2013}, bound the gap between $\utility_i$ and $\tilde{\utility}_i$ as $\norm{\utility_i - \tilde{\utility_i}}_\infty \leq \gamma_s$ with $\gamma_s > 0$.

    \item \textbf{Procedural rationality:} An \agentplayer{} may not fully succeed in maximizing their utility \cite{simon_substantive_1976}, e.g.,
    they could gain up to $\gamma_p > 0$ utility if they unilaterally deviate.
    
    \item \textbf{Myopia:} An \agentplayer{} may possess commitment power or otherwise be non-myopic, factoring in long-term incentives with a discount factor $\gamma_m \in (0,1)$. %
    This relates to notions of exogenous commitment power, e.g., partial reputation, in Stackelberg games \cite{kreps_reputation_1982, fudenberg_reputation_1989}.
\end{enumerate}

These variations represent common forms of bounded rationality.
We now show that the sampling scheme we devise for Theorem 1 can be extended to maintain robustness despite these weaker assumptions.
We aim to learn strategies $\policy_0$ that perform well even when presented with agents possessing these variations. 
Hence, we aim to use uncertainty sets $\uncertaintyset$ that encode such behaviors.
The next proposition suggests it suffices to simply relax our uncertainty set $\uncertaintyset$ to include more approximate equilibria. %
\begin{restatable}{proposition}{generalization}
\label{prop:generalization}
The uncertainty set $\uncertaintyset'$ of (any combination of) agents violating assumptions 1-3 with parameters $\gamma_m, \gamma_s, \gamma_p$ is contained in the set of $\varepsilon$-\CCE:
\begin{align*}
    \CCE_\varepsilon = \{ & \bfstrat \in \pdfover{\actionset} \mid   \forall i \in [1, \dots, \numplayers], \tilde{a}_i \in \actionset_i: \\ & \EE_{\bfaction \sim \bfstrat}[\utility_i(\vaction)] + \varepsilon_i \geq \EE_{\bfaction_{-i} \sim \bfstrat_{-i}}[\utility_i(\tilde{a}_i, \vaction_{-i})] \}, \nonumber
\end{align*}
where $\varepsilon_i = \max\{\frac{\norm{\utility_i}_\infty}{1-\gamma_m}, \gamma_s, 2 \gamma_p\}$.
Hence, we can train policies robust to such agents by using the following in the robustness objective of Equation \ref{eq:nested_opt}:
\begin{align*}
    \uncertaintyset_\varepsilon(\policy_0) = \{ \bfstrat \in \pdfover{\actionset} \mid \exists \policy_0 \in \actionset_0: (\policy_0, \bfstrat) \in \CCE_\varepsilon\}.
\end{align*}
\end{restatable}

This proposition motivates us to use approximate CCE as an uncertainty set rather than exact CCEs.
Conveniently, the algorithm we construct in the proof of Theorem~\ref{thm:mainthm} can be modified to adversarially sample from $\epsilon$-CCE instead of exact CCE.
By relaxation of Lemma \ref{lemma:blackwell}, it yields the same optimality and runtime guarantees as Theorem \ref{thm:mainthm}, but over the set of approximate $\epsilon$-CCE.
We will refer to this modification of the Theorem~\ref{thm:mainthm} algorithm as Algorithm \ref{alg:theoryinner}, which we repeat in full in the Appendix.

\section{Finding Approximate Uncertainty Sets with Blackbox Optimizers}
\label{sec:algorithm}
One challenge with using Algorithm \ref{alg:theoryinner} in practice is that it relies on a no-regret learning subprocedure that does not scale well.
This is a common bottleneck in multi-agent learning when deriving practical algorithms from algorithms with strong theoretical guarantees.
A common remedy is to replace the no-regret learning procedure with a standard learning algorithm (e.g., SGD), usually with no negative impact on empirical behavior or performance.
We will do exactly this to derive a practical variant of Algorithm \ref{alg:theoryinner} that still inherits its theoretical intuitions.
This involves two main steps.
\paragraph{Removing binary search.}
Algorithm \ref{alg:theoryinner}'s binary search over $y$ is a theoretically efficient search over possible optimal or worst-case values of $\epsilon$-CCE.
However, it is inefficient in a nested optimization like Equation \ref{eq:nested_opt}. 
Observing that $y$'s value affects only the parameterization of the importance weight $\beta_{m+1}$, we can fix $v_{m+1}$ to a sufficiently small value such that $\beta_{m+1} = 1[v_1, \dots, v_m \leq 0]$.

\paragraph{Replacing Blackwell's algorithm.}
During Algorithm \ref{alg:theoryinner}'s reduction to halfspace oracles, we can merge components of $v$ corresponding to the same agent.
This yields a functional equivalent of Eq \ref{eq:v},
\begin{align}
\label{eq:newv}
    v(x) = \begin{bmatrix}
    \max_{a_1 \in \actionset_1} \overline{\utility}_1(a_1, \bfstrat_{-1}) - \overline{\utility}_1(\bfstrat) \\
    \vdots \\
    \max_{a_n \in \actionset_n} \overline{\utility}_n(a_n, \bfstrat_{-n}) - \overline{\utility}_n(\bfstrat) \\
    \sigma \rightarrow 0
    \end{bmatrix}.
\end{align}
In practice, this choice is more tractable than Eq \ref{eq:v}.
Eq \ref{eq:newv} lends itself to many efficient approximations.
For instance, when $\actionset$ is combinatorially large, we can approximate the regret estimates with local methods rather than explicitly enumerating all possible action deviations.

\paragraph{A practical algorithm.}
Combined, these two modifications to Algorithm \ref{alg:theoryinner} render it equivalent to computing an upper-bound for the Lagrangian dual problem, $L(\epsilon)$, of adversarial sampling (Equation \ref{eq:adv_sample_lp}),
\begin{align}
\label{eq:lagdual}
   L(\epsilon) = & \min_{\bfstrat \in \pdfover{ \actionset}} \max_\lambda
        \overline{\utility}_0(\bfstrat) - \sum_{i=1}^\numplayers \lambda_i \brcksq{ \text{Reg}_i\brck{\bfstrat} - \epsilon},
   \\
   & \text{Reg}_i\brck{\bfstrat} := \max_{a_i \in \actionset_i} \overline{\utility}_i(a_i, \bfstrat_{-i}) - \overline{\utility}_i(\bfstrat). \nonumber
\end{align}
Here, decoupled no-regret dynamics efficiently approximate the outer optimization $\min_{\bfstrat \in \pdfover{\actionset}}$.
In this sense, the theoretical results of Section~\ref{sec:theory} can be interpreted as a formal bound on how much decoupled approximations of $\min_{\bfstrat \in \pdfover{\actionset}}$ affect the lower-bound of this dual problem.
Theorem \ref{thm:mainthm} can thus be interpreted as the implication that, when playing a sufficiently smooth game, it is reasonable to use decoupled algorithms to approximate the dual problem.
This motivates our final modification to Algorithm~\ref{alg:theoryinner}: replacing the inner no-regret learning loop with a blackbox self-play algorithm.
The final algorithm is described in Algorithm~\ref{alg:blackboxinner}.

\subsection{Adversarial Sampling Experiments}
Before applying Algorithm~\ref{alg:blackboxinner} to more ambitious mechanism design tasks, we first benchmark the quality of its adversarial sampling.
As our eventual mechanism design applications are spatiotemporal games requiring multi-agent reinforcement learning (MARL), for this experiment, we will also use MARL as the blackbox self-play procedure of Algorithm~\ref{alg:blackboxinner}. 
In particular, we will use a common multi-agent implementation of the PPO algorithm \cite{schulman_proximal_2017} and a Monte-Carlo sampling scheme as our regret estimator.

\begin{algorithm}[t]
\caption{Decoupled sampling of pessimistic equilibria.}
\label{alg:blackboxinner}
\begin{algorithmic}
\STATE \textbf{Output:} Approximate lower-bound on $L(\epsilon)$ (Eq \ref{eq:lagdual}).
\STATE \textbf{Input:} Number of training steps $M_\text{tr}$ and self-play steps $M_s$, reward slack $\epsilon$, multiplier learning rate $\alpha_{\lambda}$, uncoupled self-play algorithm $B$, regret estimators $R_i: \pdfover{\actionset} \rightarrow \reals$ for each agent $i$.
\STATE Initialize mixed strategy $\bfstrat_1$.
\FOR{$j = 1, \dots, M_{\text{tr}}$}
\FOR{$i = 1, \dots, \numplayers$}
    \STATE Estimate regret $r_i$ as $\hat{r}_i \leftarrow R_i(\bfstrat_{j})$, where $r_i := \max_{\tilde{x}_i \in \pdfover{\actionset_i}}  \overline{\utility}_i(\tilde{x}_i, \bfstrat_{-i}) - \overline{\utility}_i(\bfstrat). \nonumber$
    \STATE Compute multiplier $\lambda_i \leftarrow \lambda_i - \alpha_{\lambda} \brck{\hat{r}_i - \epsilon}$.
\ENDFOR
\STATE Using $B$, run $M_s$ rounds of self-play with utilities
$\hat{\utility}_i(\bfaction) := (\lambda_i \utility_i(\bfaction) - \utility_0(\bfaction) ) / (1 + \lambda_i)$.
\STATE Set $\bfstrat_{j+1}$ as the resulting empirical play distribution.
\ENDFOR
\STATE Return $\frac{1}{M_{\text{tr}}} \sum_{t=1}^{M_{\text{tr}}} \overline{u}_0(\bfstrat_t)$. 
\end{algorithmic}
\end{algorithm}
\vspace{-0.05in}

\begin{figure*}[t!]
    \centering
    \includegraphics[width=0.4\linewidth]{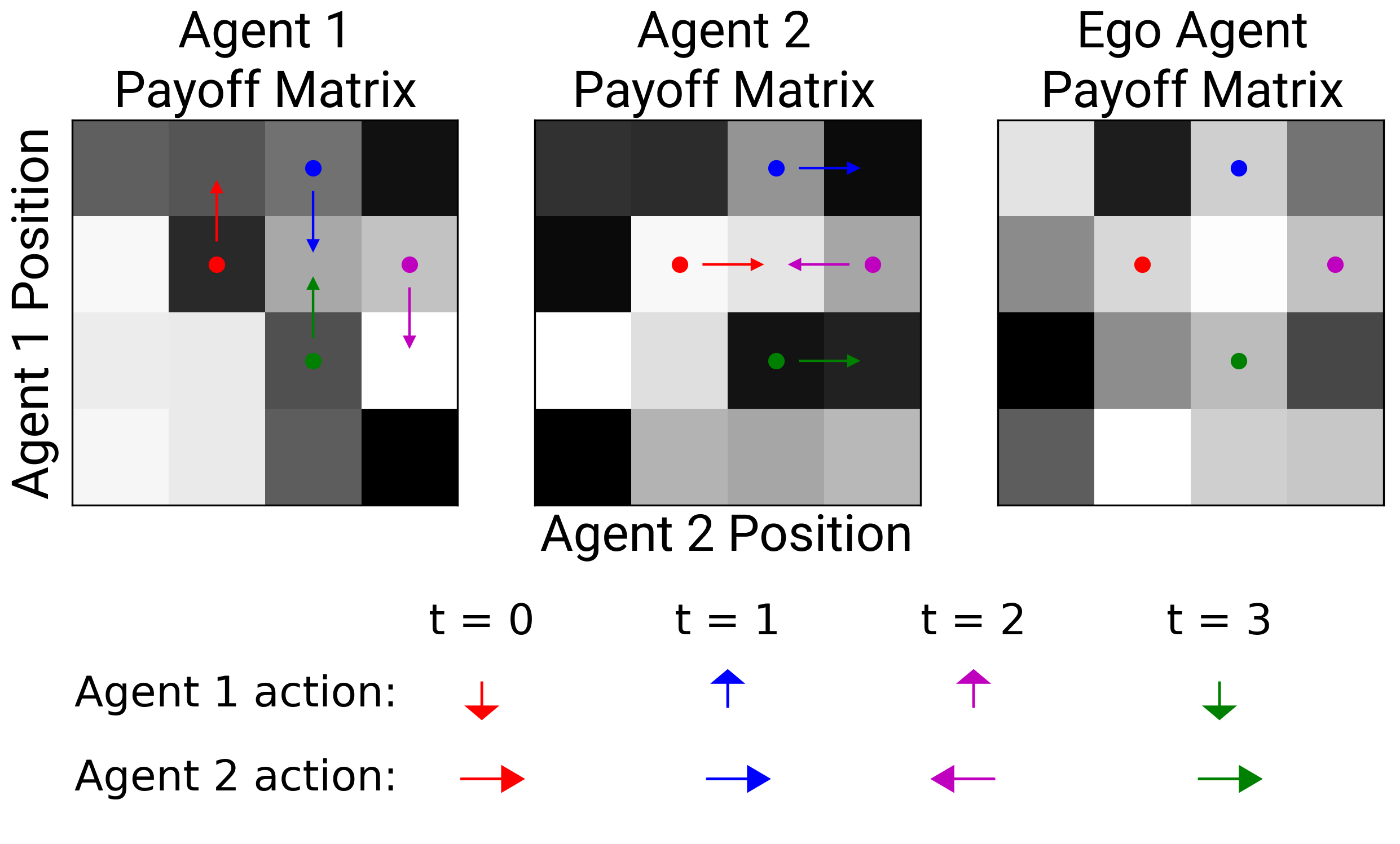}
    \includegraphics[width=0.59\linewidth]{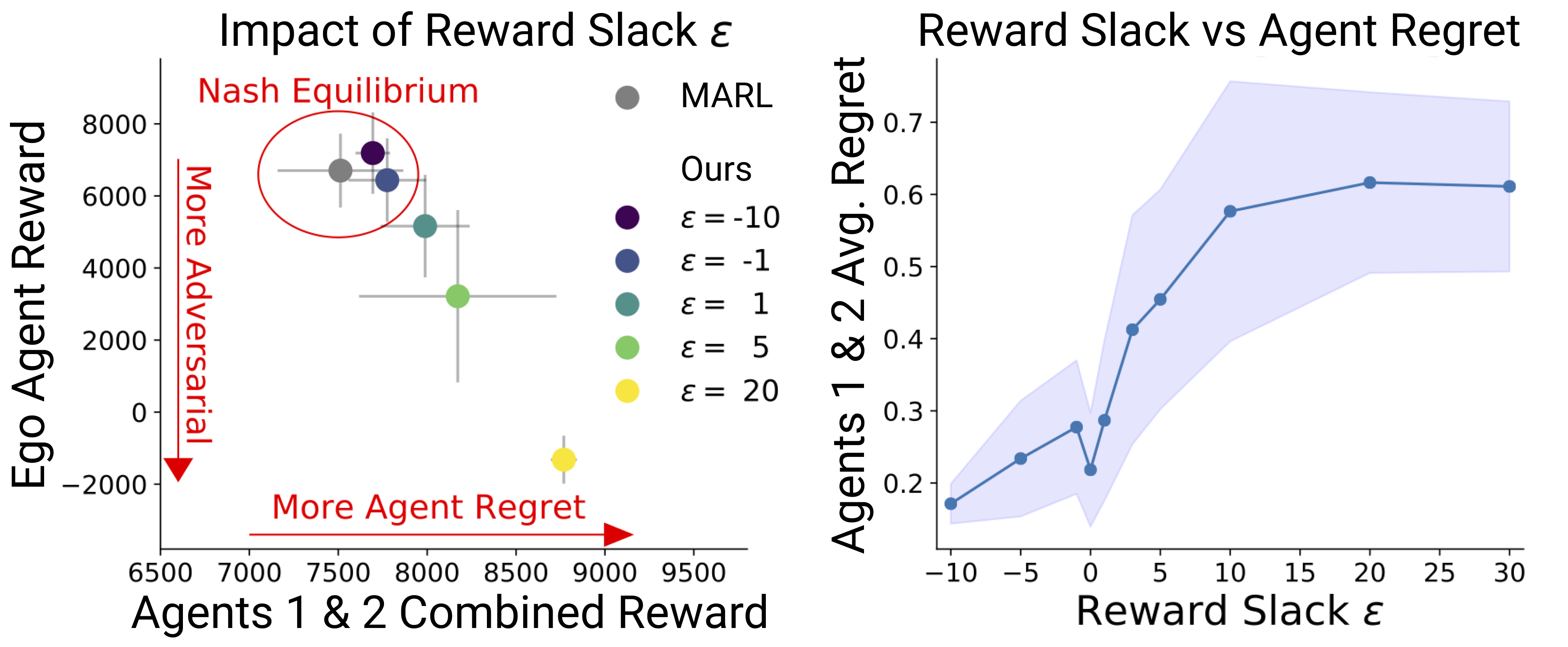}
    \caption{
        Validating \ouralgolong{} (Algorithm \ref{alg:dynamicmechanism}) in constrained repeated bimatrix games.
        \textbf{Left:} In the repeated bimatrix game, two agents navigate a 2D landscape. 
        Both agents and the principal receive rewards based on visited coordinates. Brighter squares indicate higher payoff.
        The bimatrix reward structure encodes a social dilemma featuring a Nash equilibrium with low reward for the two agents and high reward for the \bimatrixactor.
        Vanilla MARL converges to this equilibrium.
        \textbf{Right:} Agents trained with \ouralgo{} deviate from this equilibrium in order to reduce the reward of the principal.
        $\epsilon$ governs the extent of the allowable deviation.
        As $\epsilon$ increases, the average per-timestep regret experienced by the agents also increases. Each average is taken over the final 12 episodes after rewards have converged.
        Each point in the above scatter plots describes the average outcome at the end of training for the agents ($x$-coordinate) and the principal ($y$-coordinate). Error bars indicate standard deviation.
    }
    \label{fig:envs:bimatrix}
\end{figure*}

\paragraph{Game Environment.}
The game environment for this experiment is a Sequential Bimatrix Game.
This is an extension of the classic \textit{repeated bimatrix game} (Figure \ref{fig:envs:bimatrix}), whose Nash equilibria can be solved efficiently and is well-studied in game theory.
At each timestep $t$, a row (agent 1) and column player (agent 2) choose how to move around a $4\times 4$ grid, while receiving rewards $r_1(\state_i, \state_j), r_2(\state_i, \state_j)$.
The current location is at row $\state_i$ and column $\state_j$. 
The row (column) player chooses whether to move up (left) or down (right).
Each episode is $500$ timesteps.

We configure the payoff matrices $r_1$ and $r_2$, illustrated in Figure \ref{fig:envs:bimatrix}, so that only one Nash equilibrium exists and that the equilibrium constitutes a ``tragedy-of-the-commons,'' where agents selfishly optimizing their own reward leads to less reward overall.
The principal is a passive observer that observes the game and receives a payoff $\robustagentreward(\state_i, \state_j)$.
The principal does not take any actions and its payoff is constructed such that its reward is high when the agents are at the Nash equilibrium.
If Algorithm~\ref{alg:blackboxinner} successfully samples realistic worst-case behaviors, we expect to see agents 1 and 2 learning to deviate from their tragedy-of-the-commons equilibrium in order to (1) reduce the principal's reward but also (2) without significantly increasing their own regret.

\paragraph{Algorithm~\ref{alg:blackboxinner} efficiently interpolates between adversarial and low-regret equilibria.}
In Figure \ref{fig:envs:bimatrix} (middle), we see the equilibria reached by agents balance the reward of the principal (y-axis) and themselves (x-axis).
In particular, we see that conventional multi-agent RL discovers the attractive Nash equilibrium, which is in the top left.
At this equilibrium, the agents do not cooperate and the principal receives high reward.
Similarly, for small values of $\epsilon$, our algorithm discovers the Nash equilibrium.
Because $\epsilon$ acts as a constraint on agent regret, with larger values of $\epsilon$, \ouralgo{} deviates farther from the Nash equilibrium, discovering $\epsilon$-equilibria to the bottom-right that result in lower principal rewards.

\paragraph{Algorithm~\ref{alg:blackboxinner} has tight control over how much agents sacrifice to hurt the principal.}
We see in Figure \ref{fig:envs:bimatrix} (right) that deviations from the Nash equilibrium yield higher regret for the agents, i.e., regret increases with $\epsilon$.
This figure also confirms that increasing our algorithm's slack parameter correctly increases the incentive of the agents to incur regret in order to harm the principal.

\section{Optimizing Strategic Robustness}
\label{sec:nested}
\begin{figure*}[ht]
    \centering
    \small
    \begin{tabular}{lrrrrrrr}
\textbf{Training $\downarrow$ Testing $\rightarrow$} & \textbf{Original} & \textbf{\AdversarialAgent{} ($Q=0.25$)} & \textbf{\AdversarialAgent{} ($Q=1$)} & \textbf{RiskAv ($\eta=0.05$)} & \textbf{RiskAv ($\eta = 0.2$)} 
\\ \hline
MARL & 104\tiny{$\pm$50.5} & -5.8\tiny{$\pm$34.1} & -232\tiny{$\pm$29.3} & 383\tiny{$\pm$14.4} & 352\tiny{$\pm$42.6} \\
\AdversarialAgent{} ($Q = 0.25$) & \mmbf{143}\tiny{$\pm$35.0} & \mbf{64.7}\tiny{$\pm$35.1} & -191\tiny{$\pm$36.0} & 236\tiny{$\pm$23.8} & 292\tiny{$\pm$28.4} \\
\AdversarialAgent{} ($Q = 1$) & 131\tiny{$\pm$63.1} & -23\tiny{$\pm$10.1} & \mbf{-47}\tiny{$\pm$8.20} & 286\tiny{$\pm$35.1} & 290\tiny{$\pm$36.0} \\
RiskAv ($\eta = 0.05$) & -20\tiny{$\pm$44.2} & -112\tiny{$\pm$15.7} & -222\tiny{$\pm$29.3} & \mbf{404}\tiny{$\pm$47.2} & \mbf{464}\tiny{$\pm$0.95} \\
RiskAv ($\eta = 0.2$) & -53\tiny{$\pm$32.4} & -150\tiny{$\pm$20.0} & -283\tiny{$\pm$29.8} & \mbf{465}\tiny{$\pm$0.61} & \mmbf{358}\tiny{$\pm$70.9} \\
\ouralgoshort{} $\epsilon=-10$ & \mbf{227}\tiny{$\pm$50.8} & \mbf{48.9}\tiny{$\pm$12.4} & \mmbf{-137}\tiny{$\pm$43.1} & 265\tiny{$\pm$41.8} & 292\tiny{$\pm$38.3} \\
\ouralgoshort{} $\epsilon=50$ & \mbf{221}\tiny{$\pm$80.8} & \mbf{48.3}\tiny{$\pm$29.7} & \mbf{-53}\tiny{$\pm$14.0} & \mbf{460}\tiny{$\pm$33.9} & \mbf{481}\tiny{$\pm$38.6}
    \end{tabular}
    \caption{
    \textbf{Robust performance in $\numagents$-agent matrix games.} 
    We train an \robustagent{} in a $7\times7\times7\times7$ matrix game with $n=4$ agents (including the \robustagent{}) until convergence.
    For each method, we train 20 seeds and select the top 10 in a validation environment.
    Each row corresponds to a specific agent type that the \robustagent{} is trained on.
    'MARL' refers to agents trained using their `Original` reward definition;
    `\AdversarialAgent{}' refers to adversarial agents;
    `\RiskAverseAgent{}' refers to risk-averse agents.
    The \robustagent{}s trained on these types of agents tend to perform best when evaluated on the same type seen during training.
    In contrast, \robustagent{}s trained against agent behaviors sampled using \ouralgolong{} ($\epsilon = 50$) perform within standard error of top-1 on all agent types.
    We use the `Original` reward definition when training with \ouralgolong{}.
    }
    \label{fig:nmatrixperf}
\end{figure*}

\begin{figure*}[ht]
    \centering
    \small
    \begin{tabular}{lrrrrrrr}
\textbf{Training $\downarrow$ Testing $\rightarrow$} & \textbf{Original} & $\eta = 0.11$ & $\eta = 0.19$ & $\eta = 0.27$ & $\alpha = 0.25$ & $\alpha = 2.5$ \\ \hline
Free Market & 326\tiny{$\pm$1} & 527\tiny{$\pm$2} & 427\tiny{$\pm$1} & \mmbf{162}\tiny{$\pm$1} & 248\tiny{$\pm$2} & 112\tiny{$\pm$0} \\
Federal & 335\tiny{$\pm$8} & 637\tiny{$\pm$5} & 497\tiny{$\pm$2} & 150\tiny{$\pm$2} & \mbf{270}\tiny{$\pm$1} & 121\tiny{$\pm$0} \\
Saez & \mbf{381}\tiny{$\pm$1} & 597\tiny{$\pm$3} & 487\tiny{$\pm$4} & \mbf{189}\tiny{$\pm$1} & 265\tiny{$\pm$0} & 127\tiny{$\pm$0} \\
\hline
Ours ($\epsilon=-30$) & \mbf{375}\tiny{$\pm$9} & \mmbf{646}\tiny{$\pm$6} & \mmbf{514}\tiny{$\pm$12} & \mmbf{164}\tiny{$\pm$9} & \mmbf{266}\tiny{$\pm$2} & \mmbf{131}\tiny{$\pm$2} \\
AI Economist (Original) & \mbf{386}\tiny{$\pm$2} & 628\tiny{$\pm$5} & \mmbf{515}\tiny{$\pm$1} & 123\tiny{$\pm$13} & \mmbf{267}\tiny{$\pm$0} & 129\tiny{$\pm$1} \\
AI Economist ($\eta=0.11$) & 253\tiny{$\pm$5} & \mbf{683}\tiny{$\pm$7} & 506\tiny{$\pm$1} & 140\tiny{$\pm$1} & 255\tiny{$\pm$2} & 129\tiny{$\pm$0} \\
AI Economist ($\eta=0.19$) & 308\tiny{$\pm$17} & \mmbf{665}\tiny{$\pm$9} & \mbf{543}\tiny{$\pm$6} & 82\tiny{$\pm$29} & 256\tiny{$\pm$3} & \mmbf{131}\tiny{$\pm$1} \\
AI Economist ($\eta=0.27$) & 339\tiny{$\pm$11} & 603\tiny{$\pm$3} & 477\tiny{$\pm$1} & 137\tiny{$\pm$10} & \mmbf{266}\tiny{$\pm$0} & 129\tiny{$\pm$0} \\
AI Economist ($\alpha=0.25$) & 324\tiny{$\pm$2} & 625\tiny{$\pm$10} & 501\tiny{$\pm$7} & 121\tiny{$\pm$25} & 263\tiny{$\pm$0} & 128\tiny{$\pm$0} \\
AI Economist ($\alpha=2.5$) & 104\tiny{$\pm$27} & 636\tiny{$\pm$3} & 246\tiny{$\pm$33} & 49\tiny{$\pm$10} & 251\tiny{$\pm$4} & \mbf{135}\tiny{$\pm$1} \\
    \end{tabular}
    \caption{
    \textbf{Robust dynamic tax policies in a spatiotemporal economy}.
    First 3 rows are classic tax baselines: ``Free Market'' has no taxes;  ``US Federal'' uses the 2018 US Federal progressive income tax rates;
    ``Saez'' uses an adaptive, theoretical formula to estimate optimal tax rates.
    Bottom rows correspond to learned policies trained to optimize, and evaluated on, the social welfare metric \SWF{} of equality and productivity.
    `Ours' and `AI Economist (Original)' are trained on the `Original' settings (risk aversion $\eta = 0.23$; entropy bonus $\alpha = 0.025$).
    Naive multi-agent reinforcement learning tax policies, including \cite{zheng_ai_2020}'s original AI Economist, fail to generalize to previously unseen agent types.
    In contrast, our algorithm performs within standard error of top-1 on all agent types.
    }
    \label{fig:aieperf}
\end{figure*}

We now apply our adversarial sampling scheme, Algorithm~\ref{alg:blackboxinner}, to automated mechanism design problems.
In particular, we will use Algorithm~\ref{alg:blackboxinner} to provide feedback to a reinforcement learning (RL) procedure that selects mechanisms.
This RL procedure, Algorithm \ref{alg:dynamicmechanism}, is described in the Appendix for completeness.
First, we induce a mechanism design problem on a repeated $n$-matrix game.
Then, we'll seek to learn an optimal tax policy in the AI Economist, a large-scale spatiotemporal simulation of an economy \cite{zheng_ai_2020}.
Each experiment setting features 4 to 5 agents involved in complex multi-timestep interactions.
They are thus significantly more complex and costly to train in than traditional multi-agent RL environments.

\subsection{Repeated Matrix Games.}
We first extend our repeated bimatrix game to include additional players and a principal.

\paragraph{Setup.}
The setting is now a $4$-player, general-sum, normal-form game on a randomly generated $7 \times 7 \times 7 \times 7$ payoff matrix, with the same action and payoff rules as shown in Figure \ref{fig:envs:bimatrix}.
Recall that these players will engage one another for $500$ timesteps in an episode.
Each player $i$ is associated with an ``original'' reward function $\rew_i$; this is the reward function we will have access to during training.
During test time, we may also encounter other categories of agents that have different reward functions but who are also themselves learned with RL.
\begin{enumerate}
    \item \emph{Vanilla}: $\rew_i$.
    \item \emph{Adversarial} (Adv):
   $\rew'_i = \rew_i - Q \robustagentreward$; larger $Q$ is more adversarial.
    \item \emph{Risk-averse} (RiskAv): $
    \rew_i' = (\rew_i^{1 - \isoeta} - 1)/\brck{1 - \isoeta}$;
    higher $\isoeta$ is more risk averse.
\end{enumerate}

\paragraph{Results.}
Figure \ref{fig:nmatrixperf} shows the average rewards of principals trained (rows) and evaluated (columns) on each type of agents.
We observe three key trends.
First, principals trained on one type of agent generally perform better when evaluated on the same type of agent (diagonal entries).
Second, principals trained with our adversarial sampling scheme perform better across the board.
Third, the robustness gains of our adversarial sampling scheme are stronger when $\epsilon$ is large.
This is expected as $\epsilon$ parameterizes the adversarial strength of our sampling scheme.
For small $\epsilon$, adversarial sampling reduces to random sampling as the CCE constraint is so tight it permits no adversarial deviations.
We also saw that, even though all methods were run with 20 seeds and filtered down to 10 seeds on a validation set, \ouralgolong{}'s results remain somewhat noisy, as it may not converge when badly initialized.

\subsection{Taxing a Simulated Economy.}
We now apply \ouralgo{} to designing dynamic (multi-timestep) tax policies for a simulated trade-and-barter economy with strategic taxpayers that interact with one another  \cite{zheng_ai_2020}.
See Appendix \ref{sec:spatiotemporalgame} for a screenshot.

\paragraph{Setup.}
In this simulated economy, the \robustagent{} sets a tax policy and the agents play a partially observable game, given the tax policy.
Each episode is 1,000 timesteps of economic activity.
Taxpayers earn income $\income_{i,t}$ from labor $\labor_{i,t}$ and pay taxes $\tax(\income_{i,t})$. 
They optimize their expected isoelastic utility:
\begin{align}
& \posttax{\income}_{i,t} = \income_{i,t} - T(\income_{i,t}), \quad  \rew_{i,t}(\posttax{\endow}_{i,t}, \labor_{i,t}) = \frac{\posttax{\endow}_{i,t}^{1 - \isoeta} - 1}{1 - \isoeta} - \labor_{i,t},
\end{align}    
where $\posttax{\endow}_{i,t}$ is the post-tax endowment of agent $i$, and $\isoeta>0$ sets the degree of risk aversion (higher $\isoeta$ means higher risk aversion) \citep{arrow_theory_1971}.
Players expend labor and earn income by participating in a rich simulated grid-world with resources and markets.
The \robustagent{} optimizes for social welfare $\SWF{} = \brck{1 - \frac{\numagents}{\numagents - 1} \gini(\income)} \cdot \brck{\sum_{i=1}^N \income_i}$,
the product of equality \citep{gini_variabilita_1912} and productivity.
The taxpayers are also themselves learned with multi-agent RL, using a PPO algorithm entropy hyperparameter $\alpha$ \cite{schulman_proximal_2017}.

\paragraph{Results.}
Figure \ref{fig:aieperf} shows the social welfare achieved by \ouralgo{}, naive dynamic RL policies \cite{zheng_ai_2020}, and static baseline tax policies (Saez, US Federal).
Naive RL policies achieve good test performance when evaluated on the same agents seen in training, but perform poorly with agents with different $\eta$ and noise level. %
They are often outperformed by the baseline taxes, which perform surprisingly well under strong risk aversion ($\eta =0.27$) and noisy agents (entropy bonus $\alpha = 0.25, 2.5$).
We see that the static baseline taxes may be more robust than dynamic ones, even in complex environments.
However, \ouralgo{} closes this robustness gap, consistently outperforming or tying both AI Economists and baseline taxes.

\section{Future Work}
\label{sec:futurework}
Efficient sampling of worst-case equilibria is a key challenge for robust decision-making, and by extension, automated mechanism design.
As we've explored uncertainty sets based on game-theoretic concepts, future work may build uncertainty sets that use domain knowledge or historical data and that may yield robustness to other types of domain shifts, e.g., in game dynamics.

\newpage
\bibliography{main}

\begin{thebibliography}{38}
\providecommand{\natexlab}[1]{#1}

\bibitem[{Aghassi and Bertsimas(2006)}]{aghassi_robust_2006}
Aghassi, M.; and Bertsimas, D. 2006.
\newblock Robust game theory.
\newblock \emph{Mathematical Programming}, 107(1): 231--273.

\bibitem[{Arrow(1971)}]{arrow_theory_1971}
Arrow, K.~J. 1971.
\newblock The theory of risk aversion.
\newblock \emph{Essays in the theory of risk-bearing}, 90--120.
\newblock Publisher: Markham Chicago.

\bibitem[{Barman and Ligett(2015)}]{barman_finding_2015}
Barman, S.; and Ligett, K. 2015.
\newblock Finding {Any} {Nontrivial} {Coarse} {Correlated} {Equilibrium} {Is}
  {Hard}.
\newblock \emph{ACM SIGecom Exchanges}, 14.

\bibitem[{Basilico et~al.(2020)Basilico, Coniglio, Gatti, and
  Marchesi}]{basilico_bilevel_2020}
Basilico, N.; Coniglio, S.; Gatti, N.; and Marchesi, A. 2020.
\newblock Bilevel programming methods for computing
  single-leader-multi-follower equilibria in normal-form and polymatrix games.
\newblock \emph{EURO Journal on Computational Optimization}, 8(1): 3--31.

\bibitem[{Bonabeau(2002)}]{bonabeau_agent-based_2002}
Bonabeau, E. 2002.
\newblock Agent-based modeling: {Methods} and techniques for simulating human
  systems.
\newblock \emph{Proceedings of the National Academy of Sciences}, 99(suppl 3):
  7280--7287.

\bibitem[{Cesa-Bianchi and Lugosi(2006)}]{cesa-bianchi_prediction_2006}
Cesa-Bianchi, N.; and Lugosi, G. 2006.
\newblock \emph{Prediction, {Learning}, and {Games}}.
\newblock ISBN 978-0-521-84108-5.
\newblock Journal Abbreviation: Prediction, Learning, and Games Publication
  Title: Prediction, Learning, and Games.

\bibitem[{Conitzer and Sandholm(2006)}]{conitzer_computing_2006}
Conitzer, V.; and Sandholm, T. 2006.
\newblock Computing the optimal strategy to commit to.
\newblock In \emph{Proceedings of the 7th {ACM} conference on {Electronic}
  commerce - {EC} '06}, 82--90. Ann Arbor, Michigan, USA: ACM Press.
\newblock ISBN 978-1-59593-236-5.

\bibitem[{Daskalakis, Goldberg, and
  Papadimitriou(2009)}]{daskalakis_complexity_2009}
Daskalakis, C.; Goldberg, P.~W.; and Papadimitriou, C.~H. 2009.
\newblock The {Complexity} of {Computing} a {Nash} {Equilibrium}.
\newblock \emph{SIAM Journal on Computing}, 39(1): 195--259.
\newblock Publisher: Society for Industrial and Applied Mathematics.

\bibitem[{Duetting et~al.(2019)Duetting, Feng, Narasimhan, Parkes, and
  Ravindranath}]{dutting_optimal_2019}
Duetting, P.; Feng, Z.; Narasimhan, H.; Parkes, D.~C.; and Ravindranath, S.~S.
  2019.
\newblock Optimal Auctions through Deep Learning.
\newblock In Chaudhuri, K.; and Salakhutdinov, R., eds., \emph{Proceedings of
  the 36th International Conference on Machine Learning, {ICML} 2019, 9-15 June
  2019, Long Beach, California, {USA}}, volume~97 of \emph{Proceedings of
  Machine Learning Research}, 1706--1715. {PMLR}.

\bibitem[{Foster and Vohra(1997)}]{FOSTER199740}
Foster, D.~P.; and Vohra, R.~V. 1997.
\newblock Calibrated Learning and Correlated Equilibrium.
\newblock \emph{Games and Economic Behavior}, 21(1): 40--55.

\bibitem[{Fudenberg and Levine(1989)}]{fudenberg_reputation_1989}
Fudenberg, D.; and Levine, D.~K. 1989.
\newblock Reputation and {Equilibrium} {Selection} in {Games} with a {Patient}
  {Player}.
\newblock \emph{Econometrica}, 57(4): 759--778.
\newblock Publisher: [Wiley, Econometric Society].

\bibitem[{Gini(1912)}]{gini_variabilita_1912}
Gini, C. 1912.
\newblock \emph{Variabilità e mutabilità}.

\bibitem[{Gray et~al.(2021)Gray, Lerer, Bakhtin, and
  Brown}]{gray_human-level_2021}
Gray, J.; Lerer, A.; Bakhtin, A.; and Brown, N. 2021.
\newblock Human-Level Performance in No-Press Diplomacy via Equilibrium Search.
\newblock In \emph{9th International Conference on Learning Representations,
  {ICLR} 2021, Virtual Event, Austria, May 3-7, 2021}. OpenReview.net.

\bibitem[{Hart and Mas-Colell(2000)}]{hart}
Hart, S.; and Mas-Colell, A. 2000.
\newblock A Simple Adaptive Procedure Leading to Correlated Equilibrium.
\newblock \emph{Econometrica}, 68(5): 1127--1150.

\bibitem[{Hazan and Koren(2016)}]{hazan_computational_2016}
Hazan, E.; and Koren, T. 2016.
\newblock The computational power of optimization in online learning.
\newblock In Wichs, D.; and Mansour, Y., eds., \emph{Proceedings of the 48th
  Annual {ACM} {SIGACT} Symposium on Theory of Computing, {STOC} 2016,
  Cambridge, MA, USA, June 18-21, 2016}, 128--141. {ACM}.

\bibitem[{Holland and Miller(1991)}]{holland_artificial_1991}
Holland, J.~H.; and Miller, J.~H. 1991.
\newblock Artificial {Adaptive} {Agents} in {Economic} {Theory}.
\newblock \emph{American Economic Review}, 81(2): 365--71.

\bibitem[{Hou et~al.(2020)Hou, Pang, Hong, Lan, Ma, and Yin}]{hou_robust_2020}
Hou, L.; Pang, L.; Hong, X.; Lan, Y.; Ma, Z.; and Yin, D. 2020.
\newblock Robust {Reinforcement} {Learning} with {Wasserstein} {Constraint}.
\newblock \emph{arXiv:2006.00945 [cs, stat]}.

\bibitem[{Jiang and Leyton-Brown(2011)}]{jiang_general_2011}
Jiang, A.~X.; and Leyton-Brown, K. 2011.
\newblock A {General} {Framework} for {Computing} {Optimal} {Correlated}
  {Equilibria} in {Compact} {Games}.
\newblock In Chen, N.; Elkind, E.; and Koutsoupias, E., eds., \emph{Internet
  and {Network} {Economics}}, Lecture {Notes} in {Computer} {Science},
  218--229. Berlin, Heidelberg: Springer.
\newblock ISBN 978-3-642-25510-6.

\bibitem[{Kiekintveld, Islam, and Kreinovich(2013)}]{kiekintveld_security_2013}
Kiekintveld, C.; Islam, T.; and Kreinovich, V. 2013.
\newblock Security games with interval uncertainty.
\newblock In \emph{Proceedings of the 2013 international conference on
  {Autonomous} agents and multi-agent systems}, {AAMAS} '13, 231--238.
  Richland, SC: International Foundation for Autonomous Agents and Multiagent
  Systems.
\newblock ISBN 978-1-4503-1993-5.

\bibitem[{Korzhyk, Conitzer, and Parr(2011)}]{korzhyk_security_2011}
Korzhyk, D.; Conitzer, V.; and Parr, R. 2011.
\newblock Security Games with Multiple Attacker Resources.
\newblock In Walsh, T., ed., \emph{{IJCAI} 2011, Proceedings of the 22nd
  International Joint Conference on Artificial Intelligence, Barcelona,
  Catalonia, Spain, July 16-22, 2011}, 273--279. {IJCAI/AAAI}.

\bibitem[{Kreps and Wilson(1982)}]{kreps_reputation_1982}
Kreps, D.~M.; and Wilson, R. 1982.
\newblock Reputation and imperfect information.
\newblock \emph{Journal of Economic Theory}, 27(2): 253--279.

\bibitem[{Li et~al.(2019)Li, Wu, Cui, Dong, Fang, and Russell}]{li_robust_2019}
Li, S.; Wu, Y.; Cui, X.; Dong, H.; Fang, F.; and Russell, S.~J. 2019.
\newblock Robust Multi-Agent Reinforcement Learning via Minimax Deep
  Deterministic Policy Gradient.
\newblock In \emph{The Thirty-Third {AAAI} Conference on Artificial
  Intelligence, {AAAI} 2019, The Thirty-First Innovative Applications of
  Artificial Intelligence Conference, {IAAI} 2019, The Ninth {AAAI} Symposium
  on Educational Advances in Artificial Intelligence, {EAAI} 2019, Honolulu,
  Hawaii, USA, January 27 - February 1, 2019}, 4213--4220. {AAAI} Press.

\bibitem[{Littlestone(1987)}]{littlestone_learning_1987}
Littlestone, N. 1987.
\newblock Learning {Quickly} {When} {Irrelevant} {Attributes} {Abound}: {A}
  {New} {Linear}-threshold {Algorithm}.
\newblock \emph{Mach. Learn.}, 2(4): 285--318.

\bibitem[{Morimoto and Doya(2000)}]{morimoto_robust_2001}
Morimoto, J.; and Doya, K. 2000.
\newblock Robust Reinforcement Learning.
\newblock In Leen, T.~K.; Dietterich, T.~G.; and Tresp, V., eds.,
  \emph{Advances in Neural Information Processing Systems 13, Papers from
  Neural Information Processing Systems {(NIPS)} 2000, Denver, CO, {USA}},
  1061--1067. {MIT} Press.

\bibitem[{Myerson(2016)}]{myerson_mechanism_2016}
Myerson, R.~B. 2016.
\newblock Mechanism {Design}.
\newblock In \emph{The {New} {Palgrave} {Dictionary} of {Economics}}, 1--13.
  London: Palgrave Macmillan UK.
\newblock ISBN 978-1-349-95121-5.

\bibitem[{Nguyen et~al.(2016)Nguyen, Sinha, Gholami, Plumptre, Joppa, Tambe,
  Driciru, Wanyama, Rwetsiba, and Critchlow}]{nguyen_capture_2016}
Nguyen, T.~H.; Sinha, A.; Gholami, S.; Plumptre, A.; Joppa, L.; Tambe, M.;
  Driciru, M.; Wanyama, F.; Rwetsiba, A.; and Critchlow, R. 2016.
\newblock {CAPTURE}: {A} new predictive anti-poaching tool for wildlife
  protection.
\newblock \emph{Proceedings of 15th International Conference on Autonomous
  Agents and Multiagent Systems (AAMAS)}.

\bibitem[{Nguyen et~al.(2013)Nguyen, Yang, Azaria, Kraus, and
  Tambe}]{nguyen_analyzing_2013}
Nguyen, T.~H.; Yang, R.; Azaria, A.; Kraus, S.; and Tambe, M. 2013.
\newblock Analyzing the Effectiveness of Adversary Modeling in Security Games.
\newblock In desJardins, M.; and Littman, M.~L., eds., \emph{Proceedings of the
  Twenty-Seventh {AAAI} Conference on Artificial Intelligence, July 14-18,
  2013, Bellevue, Washington, {USA}}. {AAAI} Press.

\bibitem[{Papadimitriou and Roughgarden(2008)}]{papadimitriou_computing_2008}
Papadimitriou, C.~H.; and Roughgarden, T. 2008.
\newblock Computing correlated equilibria in multi-player games.
\newblock \emph{Journal of the ACM}, 55(3): 14:1--14:29.

\bibitem[{Pinto et~al.(2017)Pinto, Davidson, Sukthankar, and
  Gupta}]{pinto_robust_2017}
Pinto, L.; Davidson, J.; Sukthankar, R.; and Gupta, A. 2017.
\newblock Robust Adversarial Reinforcement Learning.
\newblock In Precup, D.; and Teh, Y.~W., eds., \emph{Proceedings of the 34th
  International Conference on Machine Learning, {ICML} 2017, Sydney, NSW,
  Australia, 6-11 August 2017}, volume~70 of \emph{Proceedings of Machine
  Learning Research}, 2817--2826. {PMLR}.

\bibitem[{Pita et~al.(2009)Pita, Jain, Ordóñez, Portway, Tambe, Western,
  Paruchuri, and Kraus}]{pita_using_2009}
Pita, J.; Jain, M.; Ordóñez, F.; Portway, C.; Tambe, M.; Western, C.;
  Paruchuri, P.; and Kraus, S. 2009.
\newblock Using {Game} {Theory} for {Los} {Angeles} {Airport} {Security}.
\newblock \emph{AI Magazine}, 30: 43--57.

\bibitem[{Pita et~al.(2010)Pita, Jain, Tambe, Ordóñez, and
  Kraus}]{pita_robust_2010}
Pita, J.; Jain, M.; Tambe, M.; Ordóñez, F.; and Kraus, S. 2010.
\newblock Robust solutions to {Stackelberg} games: {Addressing} bounded
  rationality and limited observations in human cognition.
\newblock \emph{Artificial Intelligence}, 174(15): 1142--1171.

\bibitem[{Pita et~al.(2012)Pita, John, Maheswaran, Tambe, and
  Kraus}]{pita_robust_2012}
Pita, J.; John, R.; Maheswaran, R.; Tambe, M.; and Kraus, S. 2012.
\newblock A robust approach to addressing human adversaries in security games.
\newblock In \emph{{ECAI} 2012}, 660--665. IOS Press.

\bibitem[{Roughgarden(2015)}]{roughgarden_intrinsic_2015}
Roughgarden, T. 2015.
\newblock Intrinsic {Robustness} of the {Price} of {Anarchy}.
\newblock \emph{Journal of the ACM}, 62(5): 32:1--32:42.

\bibitem[{Sch{\"{a}}fer, Zheng, and Anandkumar(2020)}]{schafer_implicit_2020}
Sch{\"{a}}fer, F.; Zheng, H.; and Anandkumar, A. 2020.
\newblock Implicit competitive regularization in GANs.
\newblock In \emph{Proceedings of the 37th International Conference on Machine
  Learning, {ICML} 2020, 13-18 July 2020, Virtual Event}, volume 119 of
  \emph{Proceedings of Machine Learning Research}, 8533--8544. {PMLR}.

\bibitem[{Schulman et~al.(2017)Schulman, Wolski, Dhariwal, Radford, and
  Klimov}]{schulman_proximal_2017}
Schulman, J.; Wolski, F.; Dhariwal, P.; Radford, A.; and Klimov, O. 2017.
\newblock Proximal {Policy} {Optimization} {Algorithms}.
\newblock \emph{arXiv:1707.06347 [cs]}.
\newblock ArXiv: 1707.06347.

\bibitem[{Simon(1976)}]{simon_substantive_1976}
Simon, H.~A. 1976.
\newblock From substantive to procedural rationality.
\newblock In Kastelein, T.~J.; Kuipers, S.~K.; Nijenhuis, W.~A.; and Wagenaar,
  G.~R., eds., \emph{25 {Years} of {Economic} {Theory}: {Retrospect} and
  prospect}, 65--86. Boston, MA: Springer US.
\newblock ISBN 978-1-4613-4367-7.

\bibitem[{Tessler, Efroni, and Mannor(2019)}]{tessler_action_2019}
Tessler, C.; Efroni, Y.; and Mannor, S. 2019.
\newblock Action Robust Reinforcement Learning and Applications in Continuous
  Control.
\newblock In Chaudhuri, K.; and Salakhutdinov, R., eds., \emph{Proceedings of
  the 36th International Conference on Machine Learning, {ICML} 2019, 9-15 June
  2019, Long Beach, California, {USA}}, volume~97 of \emph{Proceedings of
  Machine Learning Research}, 6215--6224. {PMLR}.

\bibitem[{Zheng et~al.(2020)Zheng, Trott, Srinivasa, Naik, Gruesbeck, Parkes,
  and Socher}]{zheng_ai_2020}
Zheng, S.; Trott, A.; Srinivasa, S.; Naik, N.; Gruesbeck, M.; Parkes, D.~C.;
  and Socher, R. 2020.
\newblock The {AI} {Economist}: {Improving} {Equality} and {Productivity} with
  {AI}-{Driven} {Tax} {Policies}.
\newblock \emph{arXiv:2004.13332 [cs, econ, q-fin, stat]}.

\end{thebibliography}

\newpage
\ %
\newpage
\appendix
\newpage
\section{Appendix: Proofs}

\subsection{Proof of Lemma \ref{lemma:blackwell}}
First, we observe that finding a $\specutility$-maximizing CCE is equivalent to solving the linear program below.
Writing the joint distribution $\bfstrat$ as a $\prod_{i=1}^\numplayers m_i$-size vector of joint densities, a $\specutility$-maximizing \CCE{} is given by:
\begin{align}
\label{eq:adv_sample_lp}
    \max_{\bfstrat \in \pdfover{\actionset}} \overline{\specutility}(\bfstrat) \text{  s.t.  } & \forall i \in [1,\dots, \numplayers], \forall \action_i \in \actionset_i: \\
    & \sum_{\bfaction' \in \actionset}  \left[ \utility_i(\action_i, \bfaction'_{-i}) - \utility_i(\bfaction') \right] \bfstrat(\bfaction') \leq 0.\nonumber
\end{align}
Although the number of constraints in this LP is polynomial, the number of variables is non-trivially exponential.
In addition, the polynomial expectation property only guarantees polynomial time expected utility computations for product distributions.

\blackwell*

\begin{proof}
    We will prove our lemma constructively.
    This proof closely mirrors \cite{cesa-bianchi_prediction_2006}'s proof of Blackwell approachability.
    
    First, we establish approachability conditions.
    Note that $\bfstrat$ is an $\epsilon$-CCE satisfying $\overline{\specutility}(\bfstrat) \leq \epsilon + y$ if $\bfstrat$ satisfies $v(\bfstrat) \leq \epsilon$ component-wise; the first $m$ components give that $\bfstrat$ is an $\epsilon$-CCE and the last component gives that $\overline{\specutility}(\bfstrat) \leq \epsilon + y$.
    Thus if $y \leq \max_{\bfstrat \in \CCE} \overline{\specutility}(\bfstrat)$, then we know there exists a $\bfstrat \in \pdfover{\actionset}$ such that $v(\bfstrat) \leq 0$.
    In other words, the closed convex set $v(\bfstrat) \leq 0$ is approachable.
    
    For the remainder of this proof, we assume $y \leq \max_{\bfstrat \in \CCE} \overline{\specutility}(\bfstrat)$ and prove that we will return a $\epsilon$-CCE that satisfies $\overline{\specutility}(\bfstrat) \leq \epsilon + y$.
    If $y$ does not satisfy this condition, then we will not return a CCE  $\bfstrat$ that satisfies $\overline{\specutility}(\bfstrat) \leq y$ because no such $\bfstrat$ would exist.
    Let $\bfstrat^* := \argmax_{\bfstrat \in \CCE} \overline{\specutility}(\bfstrat)$.
    
    Fix a choice of $\epsilon$.
    Now, consider the following iterative algorithm where we index timesteps by $t= 1,2,\dots$ and produce a sequence of mixed strategies beginning with some arbitrary choice of $\bfstrat^{(0)} \in \pdfover{\actionset}$.
    We now consider every timestep $t$ where the average value of $v(\bfstrat)$,
    \begin{align*}
        \overline{v}_t :=  \frac{1}{t-1} \sum_{i=1}^{t-1} v(\bfstrat^{(i)}),
    \end{align*}
    does not satisfy $\overline{v}_t \leq \epsilon$ component-wise.
    If instead $\overline{v}_t \leq \epsilon$ holds component-wise, we can return $\overline{\bfstrat}$, a uniform distribution over $\bfstrat^{(1)}, \dots, \bfstrat^{(t-1)}$, and be done.
    
    Otherwise, project $\overline{v}_t$ onto the negative orthant, $S := [-\infty, 0]^{1+m}$, to obtain a vector $\beta \in \reals^{m+1}$,
    \begin{align*}
        a_t &= \argmin_{a \in S} \norm{\overline{v}_t - a} \\
        \beta_t &= \frac{\overline{v}_t - a_t}{\norm{\overline{v}_t - a_t}}
        .
    \end{align*}
    Since $S$ is closed and convex, $\beta$ exists and is unique.
    Note that $\beta$ must be non-negative in all components by definition of $S$.
    We then consult our halfspace oracle to return our next iterate $\bfstrat^{(t)} \in \pdfover{ \actionset}$ where $\beta_t v(\bfstrat^{(t)}) \leq 0$.
    A solution must exist as, for any $t$, $\bfstrat^*$ satisfies $\beta_t v(\bfstrat^{*}) \leq 0$.
    
    We now prove this algorithm returns the desired output in Poly($\frac{1}{\epsilon}, n, m$) and hence Poly($\frac{1}{\epsilon}, |I|$) time.
    
    Since $\overline{v}_{t+1} = \frac{t-1}{t} \overline{v}_{t} + \frac{1}{t} v(\bfstrat^{(t)})$, we may write,
    \begin{align*}
        d(\overline{v}_{t+1}, S)^2 & = \norm{\overline{v}_{t+1}-a_{t+1}}^2 \\
         \leq & \norm{\overline{v}_{t+1}-a_{t}}^2 \nonumber \\
        = & \norm{\frac{t-1}{t} \overline{v}_{t} + \frac{1}{t} v(\bfstrat^{(t)}) -a_{t}}^2 \nonumber \\
        = & \norm{\frac{t-1}{t} \left( \overline{v}_{t}   -a_{t} \right)  +  \frac{1}{t} \left(v(\bfstrat^{(t)})   -a_{t} \right)}^2 \nonumber \\
        = & \left(\frac{t-1}{t}\right)^2 \norm{ \overline{v}_{t}   -a_{t} }^2 \nonumber \\
        & + \left(\frac{1}{t}\right)^2\norm{  v(\bfstrat^{(t)})   -a_{t}}^2 \nonumber \\
        & + 2 \frac{t-1}{t^2} ( v(\bfstrat^{(t)} ) -a_{t}) \cdot (\overline{v}_{t}   -a_{t})
    \end{align*}
    
    Let $u_{\max{}} := \max \{ \norm{\utility_1}_\infty, \dots,  \norm{\utility_\numplayers}_\infty, \norm{\specutility}_\infty \}$ and $u_{\min{}} := -\max \{ \norm{-\utility_1}_\infty, \dots,  \norm{-\utility_\numplayers}_\infty, \norm{-\specutility}_\infty \}$.
    Recall that by construction of $G$, $u_{\max{}}, u_{\min{}}$ is polynomial bounded by $|I|$ so without loss of generality, let us fix utilities to a unit ball: $u_{\max{}} \leq 1, u_{\min{}} \geq -1$.
    We can rearrange the equation to obtain,
    \begin{align*}
        t^2 \norm{\overline{v}_{t+1}-a_{t+1}}^2 & - (t-1)^2 \norm{ \overline{v}_{t}   -a_{t} }^2 
        \nonumber \\
        & \leq (m+1) + 2(t-1)( v(\bfstrat^{(t)} ) -a_{t}) \cdot (\overline{v}_{t}   -a_{t}) \nonumber 
    \end{align*}
    Then sum both sides of the inequality for $t=1,\dots,n$ with the left-hand side telescoping to become $n^2 \norm{\overline{v}_{n+1}-a_{n+1}}^2$. Dividing both sides by $n^2$,
    \begin{align*}
     &\norm{\overline{v}_{n+1}-a_{n+1}}^2
   \nonumber 
   \\& \leq \frac{(m+1) }{n} + \frac{2}{n} \sum_{t=1}^n \frac{t-1}{n}( v(\bfstrat^{(t)} ) -a_{t}) \cdot (\overline{v}_{t}   -a_{t}) \nonumber \\
   & \leq \frac{(m+1) }{n} + \frac{2}{n} \sum_{t=1}^n \frac{t-1}{n} \beta_t ( v(\bfstrat^{(t)} ) -a_{t}) \cdot \norm{\overline{v}_{t}   -a_{t}} \nonumber \\
    \end{align*}
    Because $\frac{t-1}{n} \norm{\overline{v}_{t}   -a_{t}} \in [0,2]$, $\beta_t, -a_t \geq 0$ component-wise, and by construction of $\bfstrat_t$,
    \begin{align*}
     \norm{\overline{v}_{n+1}-a_{n+1}}^2
   \nonumber 
   & \leq \frac{(m+1) }{n} + \frac{8}{n} \sum_{t=1}^n \beta_t ( v(\bfstrat^{(t)} ) -a_{t}) \nonumber\\
   & \leq \frac{(m+1) }{n} + \frac{8}{n} \sum_{t=1}^n \beta_t ( v(\bfstrat^{(t)} )) \nonumber
   \\
   &\leq \frac{(m+1) }{{n}}
    \end{align*}
    By triangle inequality, after $n$ steps, we have a $\frac{2}{\sqrt{n}}$-CCE $\bfstrat$ with $\overline{\specutility}(\bfstrat) \geq y - \frac{2}{\sqrt{n}}$.
    Substituting $\epsilon = \frac{2}{\sqrt{n}}$, we have a Poly($\frac{1}{\epsilon}, n, m$) runtime concluding our proof of this lemma.
    
    Suppose that $\bfstrat^t$ is chosen where we only have the guarantee that $(\beta_t)_{1:m} v(\bfstrat^t)_{1:m} \leq \epsilon_1$ and  $(\beta_t)_{m+1} v(\bfstrat^t)_{m+1} \leq \epsilon_2$.
    Then, we would instead have $\norm{(\overline{v}_{n+1})_{1:m}-(a_{n+1})_{1:m}}^2 \leq \frac{(m+1) }{n} + 8 \epsilon_1$ and $\norm{(\overline{v}_{n+1})_{m+1}-(a_{n+1})_{m+1}}^2 \leq \frac{(m+1) }{n} + 8 \epsilon_2$.
\end{proof}

\subsection{Proof of Lemma \ref{lemma:halfspace}}

\halfspace*

\begin{proof}
We will prove that this oracle exists by construction.
We will further limit our choices of candidates $\bfstrat$ to product distributions.
Then, finding our desired oracle is equivalent to solving an optimization problem over $m$ variables: $x_1^1,\dots, x_1^{m_1}, \dots, x_\numplayers^1, \dots, x_\numplayers^{m_\numplayers}$ where $x_i^j$ corresponds to the probability that agent $i$ plays action $j$ under $\bfstrat$.
The objective is minimizing
\begin{align*}
    &\beta v(\bfstrat) = 
     - \beta_{m+1} (\overline{\specutility}(\bfstrat) - y) \nonumber \\
     &- \sum_{i=1}^\numplayers \left(\sum_{j=1}^{m_i} \beta_i^j\right) \left(\overline{\utility}_i(\bfstrat)  - \frac{\sum_{j=1}^{m_i}  \beta_i^j \sum_{\bfaction_{-i}} x_{-i}^{\bfaction{-i}}\utility_i(j, \bfaction_{-i})}{\sum_{j=1}^{m_i} \beta_i^j}  \right) \nonumber\\
     & =
      -\beta_{m+1} (\overline{\specutility}(\bfstrat) - y) \nonumber\\
      & - \sum_{i=1}^\numplayers \norm{\beta_i}_1 \left(\overline{\utility}_i(\bfstrat)  -\overline{\utility}_i\left(\frac{\beta_i}{\norm{\beta_i}_1}, \bfstrat_{-i}\right)  \right) 
\end{align*}
down to a non-positive value.
For convenience, we'll write,
\begin{align*}
    &C(\bfstrat) = 
     - \beta_{m+1} \overline{v}(\bfstrat) \nonumber \\
      & - \sum_{i=1}^\numplayers \norm{\beta_i}_1 \left(\overline{\utility}_i(\bfstrat)  -\overline{\utility}_i\left(\frac{\beta_i}{\norm{\beta_i}_1}, \bfstrat_{-i}\right)  \right) 
\end{align*}
Note that efficiently finding a $\bfstrat$ with a guaranteed negative upper bound lets us specify that upper bound as a condition $y$. 

This optimization problem is combinatorial and generally intractable.
However, we can exploit the usually high social welfare of no-regret learning.
Consider no-regret learning with
$n$ agents that have the cost function,
\begin{align*}
    c_i(\bfstrat) = - \frac{\beta_{m+1}\overline{\specutility}(\bfstrat)}{n} - \norm{\beta_i}_1 \left(\overline{\utility}_i(\bfstrat) -\overline{\utility}_i\left(\frac{\beta_i}{\norm{\beta_i}_1}, \bfstrat_{-i}\right)\right) ,
\end{align*}
with action space $\actionset_i$.
Since applying regret matching would ignore our $\overline{\specutility}$ terms, we instead take advantage of smoothness.
Observe that the first term (the game of penalizing principal agent) is bound by the smoothness of $\tilde{\game}$, and the second term (the game of minimizing incentive to deviate from $\beta$) is bound by the smoothness of $\game$.
Thus, for any $\bfstrat, \bfstrat^*$:
\begin{align*}
 \sum_{i=1}^N \frac{\overline{\specutility}(\bfstrat_{-i}, x^*_i)_{m+1}}{N}
    \leq \sum_{i=1}^N - \lambda_{\tilde{\game}} \frac{\overline{\specutility}(\bfstrat^*)}{N} - \mu_{\tilde{\game}} \frac{\overline{\specutility}(\bfstrat)}{N};
\end{align*}
\begin{align*}
  \sum_{i=1}^N \overline{\specutility}(\bfstrat_{-i}, x^*_i)_{1:m} 
  \leq 
     &\sum_{i=1}^N ||\beta_i|| \lambda_\game \left(\overline{\utility}_i\left(\beta\right) - \overline{\utility}_i(\beta) \right) \\
    & + ||\beta_i|| \mu_\game \left(\overline{\utility}_i\left(\frac{\beta_i}{\norm{\beta_i}_1}, \bfstrat_{-i}\right) - \overline{\utility}_i(\bfstrat) \right).
\end{align*}

We can now run a generic no-regret learning algorithm, say Hedge, which has regret $O(\sqrt{T})$ after $T$ episodes.
Specifically, the ergodic average $\bfstrat$ of $T$ episodes 
is a $O(\frac{1}{\sqrt{T}})$-CCE.
Letting $\bfstrat^*$ be a CCE for the game defined by $\{c_i\}_{i=1}^N$, it follows:
\begin{align*}
\beta v(\bfstrat) \leq&  -\beta_{m+1} (\lambda_{\tilde{\game}} \overline{\specutility}(\bfstrat^*) + \mu_{\tilde{\game}} \overline{\specutility}(\bfstrat)) \\
& + \mu_{\game} \beta_{1:m} v(\bfstrat)_{1:m} + O(\sqrt{T}).
\end{align*}

Finally, recall we require that $C(\bfstrat) + \beta_{m+1} y = \beta v(x) \leq 0$, meaning that in order for our halfspace oracle to always return something, we require $y$ satisfy:
\begin{align*}
    & \frac{\lambda \max_{\bfstrat \in \CCE}\overline{\specutility}(\bfstrat)}{\mu-1} + \beta_{m+1}y \leq 0 \\
     & \rightarrow y \leq \frac{\lambda \max_{\bfstrat \in \CCE}\overline{\specutility}(\bfstrat)}{\beta_{m+1}(1-\mu)} \leq \frac{\lambda \max_{\bfstrat \in \CCE}\overline{\specutility}(\bfstrat)}{(1-\mu)}
\end{align*}

\end{proof}

\subsection{Proof of Theorem \ref{thm:mainthm}}
\mainthm*

The main theorem follows directly from Lemmas \ref{lemma:blackwell}, \ref{lemma:halfspace}.
Specifically, for any $y \leq \max_{\bfstrat \in \MNE}$, instantiate the algorithm described in Lemma \ref{lemma:blackwell} with the oracle described in Lemma \ref{lemma:halfspace}.
This is described in full in Algorithm~\ref{alg:theoryinner}.
Note that the oracle returns a mixed strategy that is guaranteed to be composed of a polynomial number of known product distributions; hence, any mixed strategies returned by the oracle can be evaluated in polynomial time.

\subsection{Proof of Proposition \ref{prop:generalization}}
We first motivate our study of how behavioral deviations affect the generalization of strategic policies.
Specifically, we will look at examples of where naive strategic decision-making can result in significant regret when confronted with boundedly rational agents---even when said agents are asymptotically rational.

For simplicity, we'll discuss a traditional single-follower Stackelberg setting where $\numplayers = 1$.
In this setting, we'll refer to agent $0$ as the leader and to agent $1$ as the follower.
Recall that in fully rational settings, we can anticipate the follower to play a best response to whatever mixed strategy the leader adopts: $\behavior = \text{BR}(x_0) := \argmax_{a_1 \in \actionset_1} \EE_{a_0 \sim x_0} [\utility_1(a_0, a_1)]$.

Consider an instance of bounded rationality where the leader plays a strategy $x_0$ but the follower perceives the leader as having committed to the strategy $\hat{x}_0$.
This may arise in partial observability settings or when the follower is limited to bandit feedback, for example.
A classical statement of the fragility of Stackelberg equilibria is that even when the follower's error, $\norm{x_0 - \hat{x}_0}$, is infinitesimally small, the follower's bounded rationality can cost the leader a constant value bounded away from zero.
\begin{proposition}
        Fix a leader mixed strategy $x_0 \in \pdfover{ \actionset_0}$.
        There exists Stackelberg game ($\numplayers = 1$) with bounded payoffs where there is a sequence of mixed strategies $x_0^{(1)}, x_0^{(2)}, \dots \in \pdfover{\actionset_0}$ such that $\lim_{t \rightarrow \infty} \norm{x_0 - x_0^{(t)}}_\infty = 0$ but for all $t$, $\min_{x_1 \in \text{BR}(x_0)} \utility_0(x_0, x_1) - \max_{x_1 \in \text{BR}(x_0^{(t)})} \utility_0(x_0, x_1)$  is positive and bounded away from zero.
\end{proposition}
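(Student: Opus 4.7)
The plan is to build a small $2\times 2$ Stackelberg game with a mixed leader strategy $x_0$ at which the follower is \emph{indifferent} between her two pure actions, so that $\mathrm{BR}(x_0)$ is a two-element set, and then to construct a sequence of perturbations $x_0^{(t)} \to x_0$ that each fall strictly off the indifference hyperplane and therefore collapse $\mathrm{BR}(x_0^{(t)})$ to a single pure action. The leader's payoffs will be chosen so that the two follower actions in $\mathrm{BR}(x_0)$ take strictly different values for the leader when evaluated at the true $x_0$, independently of $t$; this makes the relevant gap in the proposition a fixed, nonzero quantity.

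Concretely, I would take $\actionset_0 = \{L_1, L_2\}$, $\actionset_1 = \{a, b\}$, and follower payoffs $u_1(L_i, a_j) = \mathbf{1}[i=j]$. Then $x_0 = (\tfrac12, \tfrac12)$ is the unique follower-indifference point, so $\mathrm{BR}(x_0) = \{a, b\}$. The perturbation $x_0^{(t)} = (\tfrac12 + \tfrac{1}{t},\, \tfrac12 - \tfrac{1}{t})$ forces the follower to strictly prefer $a$, so $\mathrm{BR}(x_0^{(t)}) = \{a\}$ for every $t \geq 1$, with $\|x_0 - x_0^{(t)}\|_\infty = 1/t \to 0$. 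With leader payoffs chosen so that the values $u_0(x_0, a)$ and $u_0(x_0, b)$ differ by a fixed constant $c > 0$, the two extrema in the proposition collapse to direct evaluations at pure follower actions, and the verification reduces to checking a single inequality between two fixed numbers.

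The main obstacle is the sign. Upper hemicontinuity of $\mathrm{BR}$ on finite action spaces yields $\mathrm{BR}(x_0^{(t)}) \subseteq \mathrm{BR}(x_0)$ for all sufficiently large $t$, so every element achieving the right-hand maximum is also a candidate for the left-hand minimum; this forces $\min_{x_1\in\mathrm{BR}(x_0)} u_0(x_0, x_1) \le \max_{x_1\in\mathrm{BR}(x_0^{(t)})} u_0(x_0, x_1)$ in any discrete-action Stackelberg game, contradicting strict positivity of the stated gap. To obtain a strictly positive gap as literally written, the plan must either enlarge $\actionset_1$ to be infinite---e.g., $\actionset_1 = \{a, b_1, b_2, \dots\}$ with follower payoffs designed so that new best responses emerge only in the exact limit $x_0$ and not along the perturbed sequence---or reinterpret the claim in its dual Stackelberg-commitment-fragility form, $\max_{x_1\in\mathrm{BR}(x_0)} u_0(x_0, x_1) - \min_{x_1\in\mathrm{BR}(x_0^{(t)})} u_0(x_0, x_1) \ge c$, which the $2\times 2$ game above verifies with $c = |u_0(x_0, a) - u_0(x_0, b)|$. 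The hard part, then, is making the infinite-action extension precise so that in it the statement holds as worded with a uniform $c > 0$ that does not shrink with $t$.
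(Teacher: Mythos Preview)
Your diagnosis is correct, and it is in fact sharper than the paper's own treatment. The paper does not give a formal proof of this proposition; immediately after stating it, the paper offers Samuelson's game as an illustration. But in Samuelson's game the follower's payoff from $L$ is $100$ and from $R$ is $99$ \emph{regardless of the leader's action}, so $\mathrm{BR}(x_0)=\{L\}$ for every $x_0$ and the best-response correspondence is constant. That game therefore cannot witness the proposition's claim about leader-strategy perturbations at all; the paper is really using it to illustrate fragility under \emph{follower-payoff} perturbations, which is how the surrounding prose deploys it.

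Your upper-hemicontinuity observation is the decisive point: in any finite-action game (and more generally whenever Berge's theorem applies), $\mathrm{BR}(x_0^{(t)})\subseteq \mathrm{BR}(x_0)$ for all sufficiently large $t$, which forces
\[
\min_{x_1\in\mathrm{BR}(x_0)} u_0(x_0,x_1)\;\le\;\max_{x_1\in\mathrm{BR}(x_0^{(t)})} u_0(x_0,x_1).
\]
Since the paper works throughout with finite action sets (polynomial type), the proposition as literally written cannot hold; the $\min$ and $\max$ are almost certainly transposed. Your $2\times 2$ indifference-point construction, with $x_0=(\tfrac12,\tfrac12)$ and $x_0^{(t)}=(\tfrac12+\tfrac1t,\tfrac12-\tfrac1t)$, cleanly proves the intended dual statement
\[
\max_{x_1\in\mathrm{BR}(x_0)} u_0(x_0,x_1)\;-\;\min_{x_1\in\mathrm{BR}(x_0^{(t)})} u_0(x_0,x_1)\;\ge\;c>0,
\]
and this is the standard textbook example of Stackelberg commitment fragility. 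Do not pursue the infinite-action extension: the literal statement is not salvageable under the paper's own assumptions, and your dual version is exactly what the paper's prose (``the follower's bounded rationality can cost the leader a constant value'') is describing.
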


These situations arise when the relationship between the leader and follower's utility functions is not smooth.
Samuelson's game is an example of where this may result in non-robust optimal strategies independent of a Stackelberg assumption.
Consider the below payoff matrix.

\begin{tabular}{| c | c | c | }
\hline
& L & R \\
\hline
T & 100, 100 & 50, 99 \\
\hline
B & 99, 100 & 99, 99 \\
\hline
\end{tabular}

There is a unique Nash equilibria at $a_0 = T, a_1=L$.
Accordingly, optimizing against the best response of the column-player would recommend a $T$ pure strategy to the row-player.
However, if the column-player is boundedly rational or otherwise acts according to a perturbed payoff matrix, a safer strategy for the row-player is to play the action $B$, which guarantees at least a payoff of $99$.
Optimizing against the worst-case $\epsilon$-best response of the column-player, where $\epsilon \geq 1$, would recommend this robust $B$ strategy to the row-player.
Extending this insight into the $\numplayers > 1$ domain, we similarly want to optimize against worst-case equilibria to afford robustness against payoff structures such as Samuelson games.

\generalization*
The proof of this proposition follows immediately from the observation that all three uncertainties---subjective rationality, procedural rationality, and myopicness/commitment power---can be framed as uncertainty about agent utility functions.

\def\rewuncertainty{\Xi}

\begin{proof}
First, subjective rationality is by definition an uncertainty about agent utility functions.
Our parameterization of this uncertainty yields a natural uncertainty set over reward functions.
Recall that we upper bound the infinity norm by $\gamma_s$, which is defined as the $\inf_{\gamma} \text{ s.t. } \norm{u_i - \hat{u}_i}_\infty := \max_{\bfaction \in \actionset} |u_i(\bfaction) - \hat{u}_i(\bfaction) | \leq \gamma$.
We accordingly define the uncertainty set $\rewuncertainty_\varepsilon(u_i){} = \{u_i: \actionset \rightarrow \reals \mid \norm{u_i - \hat{u}_i}_\infty \leq \varepsilon\}$.

Similarly, we can define an uncertainty set around non-myopic agents and general uncertainty about our commitment power by upper-bounding possible perturbations to an agent's originally anticipated utility function.
In particular, $| u_i  - \left(u_i(\bfstrat) + \sum_{t=1}^\infty \gamma_m^t u_i(\bfstrat_t) \right)| \leq \frac{\norm{u_i}_\infty }{1 - \gamma_m} = \epsilon$.
Hence, all non-myopic agents with $\gamma_m \leq 1 - \frac{\norm{u_i}_\infty }{\epsilon}$ will be expressed in the uncertainty set $\rewuncertainty_\varepsilon(u_i){}$.

Procedural rationality can also be framed in terms of $\rewuncertainty_\varepsilon(u_i){}$.
In particular, the $\epsilon_i$-coarse correlated equilibria under $u_i$ exactly coincides with the union of coarse correlated equilibria under reward perturbations of up to $\epsilon/2$.
Formally, $\bfstrat$ is in an $\epsilon_i$-coarse correlated equilibria under $u_i$ if and only if,
\begin{align*}
   \forall i \in [1, \dots, \numplayers], x_i \in \pdfover{ \actionset_i}: \overline{u}_i(\bfstrat) + \epsilon_i \geq \overline{u}_i(x_i, \bfstrat_{-i}) \nonumber
\end{align*}
$\bfstrat$ is in a coarse correlated equilibria under some ${u'}_i \in \rewuncertainty_\varepsilon(u_i)$ if and only if,
\begin{align*}
    \exists& {u'}_i \in \rewuncertainty_\varepsilon(u_i), \text{ s.t. } \\
    &   \forall i \in [1, \dots, \numplayers], x_i \in \pdfover{ \actionset_i}: \overline{u'}_i(\bfstrat) \geq \overline{u'}_i(x_i, \bfstrat_{-i}), \nonumber
\end{align*}
which is equivalent to the condition,
\begin{align*}
    &   \forall i \in [1, \dots, \numplayers], x_i \in \pdfover{\actionset_i}: \\
    & \overline{u'}_i(\bfstrat) + \varepsilon \geq \overline{u'}_i(x_i, \bfstrat_{-i}) - \varepsilon, \nonumber
\end{align*}

Hence, the set of $\frac{\varepsilon}{2}$-equilibria behaviors, up to $\left(1 - \frac{\norm{u_i}_\infty }{\varepsilon}\right)$-discount factor non-myopic agent behaviors, and up to $\varepsilon$ reward perturbation-consistent behaviors are all contained in the set of $\varepsilon$-CCE.
\end{proof}

\section{Appendix: Additional Experiments}
\subsection{Ablation Study: Dynamic vs Fixed Lagrange multipliers.}
In our robust learning framework, Algorithm \ref{alg:blackboxinner}, the Lagrange multipliers $\bm{\lambda}$ play an important rule in moderating the self-play dynamics used to sample adversarial dynamics.
Specifically, the multiplier $\lambda_i$ for agent $i$ balances agent $i$'s incentive to improve its own reward with its obligation to act adversarially to the \robustagent{} by minimizing the \robustagent{}'s reward.
Recall that smaller values of $\lambda$ yield more antagonistic agent objectives.
As described in Algorithm \ref{alg:dynamicmechanism}, these multipliers $\bm{\lambda}$ are periodically updated using local Monte-Carlo estimates of regret.
This raises the question of whether we can instead fix a constant non-zero value for the multipliers $\bm{\lambda}$ and still retain an effective adversarial equilibria sampler.
This experiment answers that question in the negative.
Figure \ref{fig:ablation} visualizes the equilibria discovered with a fixed $\bm{\fixedlambda}$ in the same format as in Figure \ref{fig:envs:bimatrix} (middle).
This comparison shows that using a fixed $\bm{\fixedlambda}$ affects the equilibria discovered by \ouralgo{}; the bottom right quadrant which contains the $\epsilon$-equilibria discovered with the dynamic multipliers $\bm{\lambda}$ used by Algorithm \ref{alg:dynamicmechanism} are not reached for any values of frozen multipliers $\bm{\fixedlambda}$.
This demonstrates that, even in simple game settings, certain $\epsilon$-equilibria are only reachable with dynamic $\bm{\lambda}$ and hence multiplier updates are necessary for proper behavior.

\begin{figure}[ht]
    \centering
    \includegraphics[width=0.8\linewidth]{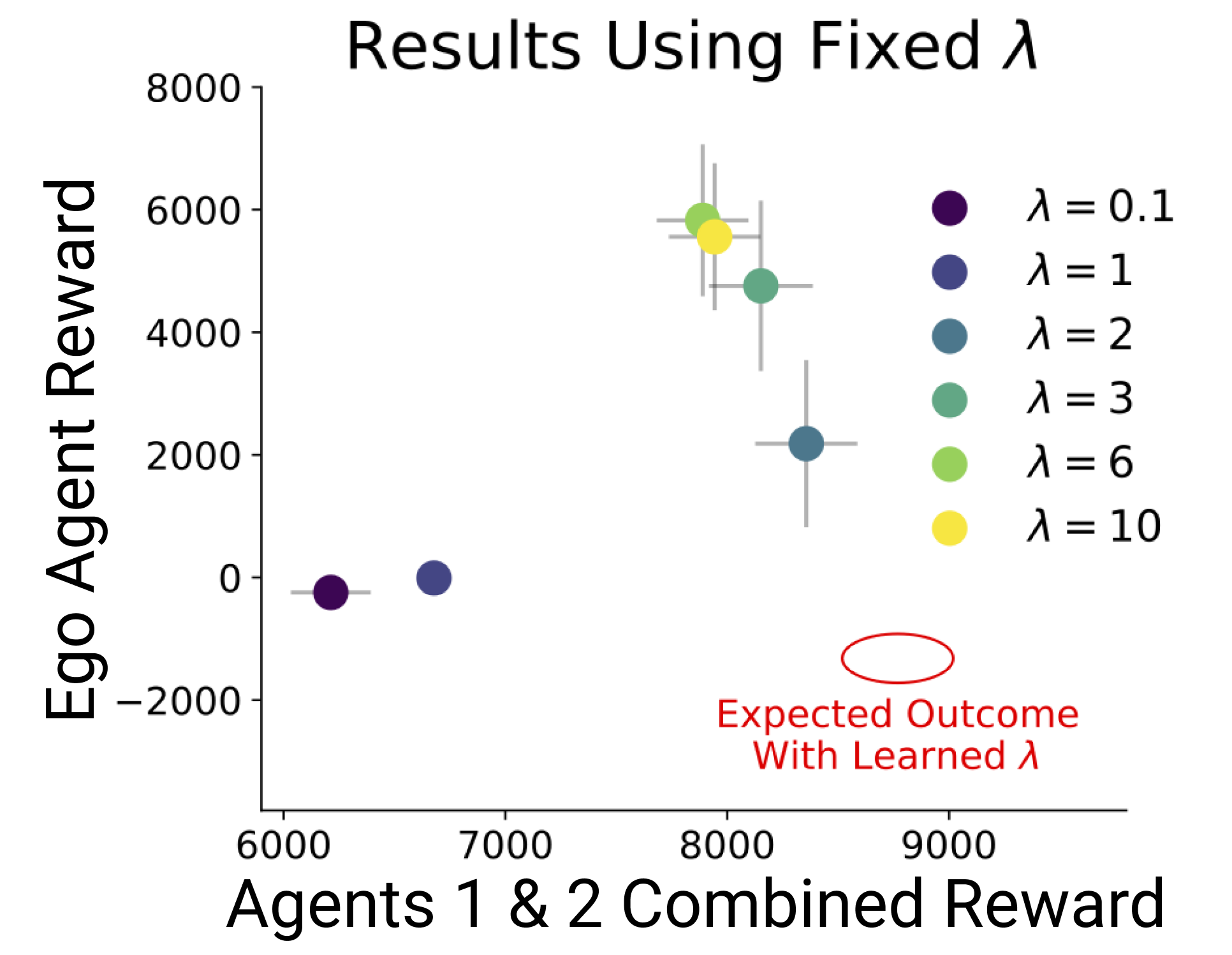}
    \caption{
    Using fixed values of $\lambda$ (rather than allowing it to update, as in the full algorithm) distorts performance and prevents agents from reaching the same $\epsilon$-equilibria discovered with learned $\lambda$.
    }
    \label{fig:ablation}
\end{figure}

\label{sec:spatiotemporalgame}
\begin{figure}[ht]
    \centering
    \includegraphics[width=0.8\linewidth]{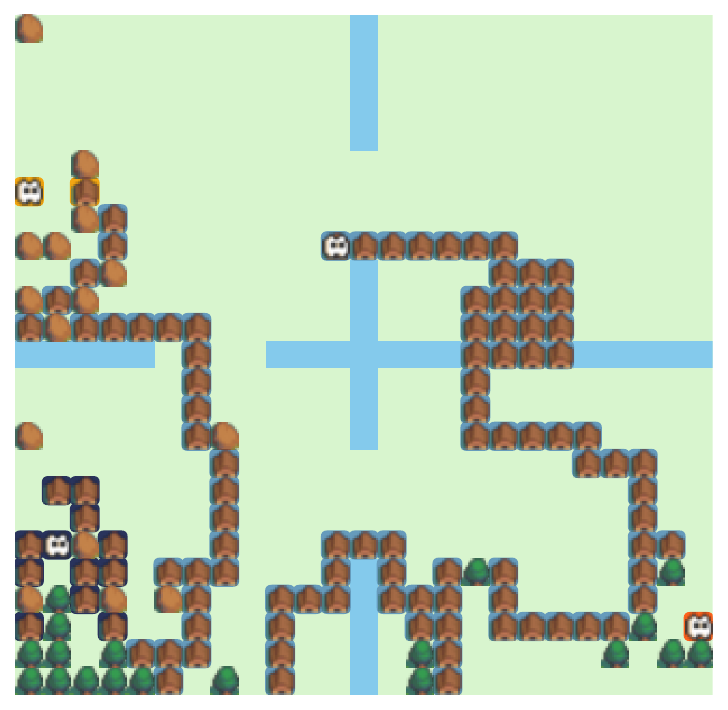}
    
    \caption{
    Visualization of the spatiotemporal economic simulation published by \cite{zheng_ai_2020} and used in our dynamic mechanism design experiment in Figure \ref{fig:aieperf}. In this 25-by-25 grid world, 4 heterogenous agents perform labor, trade resources, earn income, and pay taxes according to the schedules set by our policies.
    }
    \label{fig:spatiotemporalgame}
\end{figure}
\subsection{Additional Experiment Details}
Experiment source code is available at \url{https://github.com/salesforce/strategically-robust-ai}.

The hyperparameters for our deep dynamic mechanism design experiments are listed in Table \ref{tab:nmatrixparams} for the N-Matrix games (depicted in Figure \ref{fig:nmatrixperf}), and Table \ref{tab:aieparams} (depicted in Figure \ref{fig:aieperf}) for the AI Economist tax policy game.
Additional hyperparameters specific to the AI Economist simulation environment were kept at default values as described in the manuscript \cite{zheng_ai_2020}.

\textbf{Table \ref{tab:nmatrixparams} details}:
In the N-Matrix game experiments, we train 20 models for each training method and conduct evaluation runs of each model in every test environment.
However, evaluation runs may fail to stabilize.
Thus, we drop the bottom 10 evaluation runs for each training method.

\textbf{Table \ref{tab:aieparams} details}:
In the AI Economist tax policy experiments, we train 9 models for each training method and conduct evaluation runs of each model in every test environment.
However, evaluation runs may fail to stabilize.
Thus, we drop the bottom 6 evaluation runs for each training method.
Furthermore, for this complex task, the adversary in Algorithm \ref{alg:dynamicmechanism} may fail to converge in reasonable time.
Thus, for the Algorithm \ref{alg:dynamicmechanism} training method, we select its 9 trained models from a set of 15 candidate models (i.e., dropping at most 6 failed training runs).
This selection is made programmatically on the basis of each model's median reward on validation runs in a standalone validation environment ($\eta=0.27$).
We similarly choose the $\epsilon=30$ hyperparameter for Algorithm~\ref{alg:dynamicmechanism} from a grid-search over $\epsilon \in \{0, -10, -20, -30, -40\}$ on the basis of validation reward.

\section{Appendix: Additional Related Work}
\label{app:relatedwork}
In this section, we discuss additional related work regarding adversarially robust reinforcement learning.
The topic of adversarially robust reinforcement learning originates from control theory literature.
\citet{morimoto_robust_2001} proposed a form of adversarially robust reinforcement learning, inspired by $\gH_\infty$ control, for attaining robustness to uncertainty about the environment dynamics.
Later, \citet{pinto_robust_2017} proposes to learn an adversary policy to induce this robustness, again targeting robustness to uncertainty about environment dynamics in a control-theoretic sense.
Other works have since studied perturbing transition matrices, observation spaces, and action probabilities of Markov decision processes, with motivating applications in robotics and control theory \citep{tessler_action_2019, hou_robust_2020}.
These concepts have recently been extended to multi-agent settings, such as by \citep{li_robust_2019}.
However, these prior work on robust multi-agent learning differ from ours in two important ways.
First, their multi-agent task is controlling a swarm (of multiple agents) in an efficient and coordinated fashion.
Our multi-agent task is designing a policy that will yield favorable strategic outcomes in a multi-agent game.
Second, their works analyze control-theoretic notions of robustness, extending single-agent policy techniques to a multi-agent policy setting.

\begin{table}[H]
\caption{Hyperparameters for N-Matrix Experiments}
\label{tab:nmatrixparams}
\begin{tabular}{ll}
Parameter                                       & Value       \\ \hline
Training algorithm                              & PPO         \\
Episodes                                        & 10,000       \\
CPUs (Baselines)                                & 15          \\
CPUs (Algorithm \ref{alg:dynamicmechanism})     & 95          \\
                                                &             \\
Environment parameters                          &             \\ \hline
Number of agents                                & 4           \\
Episode length                                  & 500         \\
World dimensions                                & 7x7x7x7     \\
Training Seeds                                  & 20          \\
Test Seeds                                      & 10          \\
                                                &             \\
Neural network parameters                       &             \\ \hline
Number of convolutional layers                  & 0           \\
Number of fully-connected layers                & 2           \\
Fully-connected layer dimension (agent)         & 128; 32     \\
Fully-connected layer dimension (planner)       & 128; 32     \\
LSTM cell size (agent)                          & 0           \\
LSTM cell size (planner)                        & 0           \\
All agents share weights                        & True        \\
                                                &             \\
PPO Parameters                                  &             \\ \hline
Learning Rate (Principal)                       & 0.0006      \\
Learning Rate, $\eta$ (Other Agents)            & 0.003       \\
Entropy regularization (Principal)              & 0.1         \\
Entropy regularization, $\alpha$ (Other Agents) & 0.025       \\
Gamma                                           & 0.998       \\
GAE Lambda                                      & 0.98        \\
Gradient clipping                               & 10          \\
Value function loss coefficient                 & 0.05        \\
SGD Minibatch Size                              & 2500        \\
SGD Sequence Length                             & 50          \\
Value/Policy networks share weights             & False       \\
                                                &             \\
Algorithm \ref{alg:dynamicmechanism} Parameters &             \\ \hline
$n_\text{train}$                                & 4           \\
$n_\text{test}$                                 & 10          \\
Initial multipliers $\lambda$                   & 8           \\
Multiplier learning rate $\alpha_\lambda$       & 0.01, 0.001
\end{tabular}
\end{table}

\begin{table}[H]
\caption{Hyperparameters for AI Economist Experiments}
\label{tab:aieparams}
\begin{tabular}{ll}
Parameter                                       & Value   \\ \hline
Training algorithm                              & PPO     \\
Pretrained Episodes                             & 450,000 \\
Finetuned Episodes                              & 20,000  \\
CPUs (Baselines)                                & 15      \\
CPUs (Algorithm \ref{alg:dynamicmechanism})     & 95      \\
                                                &         \\
Environment parameters                          &         \\ \hline
Number of agents                                & 4       \\
Episode length                                  & 1000   \\
World dimensions                                & 25x25   \\
Default iso-elastic $\eta$                      & 0.23    \\
Training Seeds                                  & 9       \\
Test Seeds                                      &  3       \\
                                                &         \\
Neural network parameters                       &         \\ \hline
Number of convolutional layers                  & 2       \\
Number of fully-connected layers                & 2       \\
Fully-connected layer dimension (agent)         & 128     \\
Fully-connected layer dimension (planner)       & 256     \\
LSTM cell size (agent)                          & 128     \\
LSTM cell size (planner)                        & 256     \\
All agents share weights                        & True    \\
                                                &         \\
PPO Parameters                                  &         \\ \hline
Learning Rate (Principal)                       & 0.0001  \\
Learning Rate, $\eta$ (Other Agents)            & 0.0003  \\
Entropy regularization (Principal)              & 0.1     \\
Entropy regularization, $\alpha$ (Other Agents) & 0.025   \\
Gamma                                           & 0.998   \\
GAE Lambda                                      & 0.98    \\
Gradient clipping                               & 10      \\
Value function loss coefficient                 & 0.05    \\
SGD Minibatch Size                              & 3000    \\
SGD Sequence Length                             & 50      \\
Value/Policy networks share weights             & False   \\
                                                &         \\
Algorithm \ref{alg:dynamicmechanism} Parameters &         \\ \hline
$n_\text{train}$                                & 2       \\
$n_\text{test}$                                 & 5       \\
Initial multipliers $\lambda$                   & 20      \\
Multiplier learning rate $\alpha_\lambda$       & 0.01   
\end{tabular}
\end{table}

\newpage
\section{Appendix: Algorithm Pseudocodes}
In this section, we provide the full pseudocode description of Algorithms \ref{alg:theoryinner} and \ref{alg:dynamicmechanism}.
\begin{algorithm}[!h]
\caption{
    Poly($n, m, \rho, \varepsilon$)-time algorithm for sampling approximately optimal CCE.
}
\label{alg:theoryinner}
\begin{algorithmic}
\STATE \textbf{Output:} $\epsilon$-CCE $\bfstrat$ such that
\begin{align*}
    \overline{\specutility}(\policy_0, \bfstrat) \geq \frac{1}{\rho} \max_{\tilde{\bfstrat} \in \epsilon\mhyphen \CCE} \overline{\specutility}(\policy_0, \tilde{\bfstrat}) - \varepsilon.
\end{align*}
\STATE \textbf{Input:} Generalization parameter $\epsilon \in \reals_+^m$, tolerance $\varepsilon$, objective function $\specutility: \actionset_0 \times \actionset \rightarrow \reals$, \robustagent{} \ourpolicy{} $\policy_0$, bandit feedback access to utility functions $\overline{\utility}_0, \dots, \overline{\utility}_\numplayers$.

\WHILE{Binary search over $y \in \reals$ (within $\specutility$'s payoff bounds) is coarser than the desired Poly($|I|$) resolution}
\STATE Initialize some $\bfstrat \in \prodpdfover{\actionset}$.
\FOR{$O(\frac{1}{\varepsilon^2})$ iterations indexed by $t$}
    \STATE Compute the vector $v_t$ from Theorem \ref{thm:mainthm} (Eq \ref{eq:v}).
    \STATE Compute importance weight vector,
    \begin{align*}
        \beta_t \leftarrow
        \begin{bmatrix}
            \max\{v_1 - \epsilon_1,0\} \\
            \vdots \\
            \max\{v_{m_1} - \epsilon_1,0\} \\
            \vdots\\
            \max\{v_{m-1} - \epsilon_\numplayers,0\} \\
            \vdots \\
            \max\{v_{m} - \epsilon_\numplayers,0\} \\
            \max\{v_{m+1},0\} \\
        \end{bmatrix}
    \end{align*}
    \STATE Normalize $\beta_t \leftarrow \beta_t / \norm{\beta_t}_1$.
    \STATE Run $O(\frac{1}{\varepsilon^2})$ steps of simultaneous deterministic no-regret dynamics on $n$ agents with cost function,
    \begin{align*}
    &c_i(\bfstrat) = - \frac{\beta_{m+1}\overline{\specutility}(\bfstrat)}{n} \\
    &- \norm{\mathbf{\beta}_i}_1 \left(\overline{\utility}_i(\bfstrat) -\overline{\utility}_i\left(\frac{\mathbf{\beta}_i}{\norm{\mathbf{\beta}_i}_1}, \bfstrat_{-i}\right)\right) ,
    \end{align*}
    where $\mathbf{\beta}_i$ are the components of $\beta$ starting with index $1 + \sum_{j=1}^{i-1} m_j$ and ending with index $\sum_{j=1}^{i} m_j$ inclusive.
    \STATE Store empirical play distribution of the no-regret dynamics as $\bfstrat_t$.
    \STATE If $\beta_t \bfstrat_t \geq \varepsilon$, $y$ is too large. Terminate loop and resume binary search.
\ENDFOR
\STATE If the prior loop was successful ($y$ was not too large), store the average empirical play distribution over $\bfstrat_1, \bfstrat_2, \dots$ as $\bfstrat^*$.
\ENDWHILE
\STATE Return $\bfstrat^*$.
\end{algorithmic}
\end{algorithm}

Recall that the primary difference between Algorithm \ref{alg:blackboxinner} and Algorithm \ref{alg:theoryinner} is that Algorithm \ref{alg:theoryinner} invokes no-regret learning algorithms as a subprocedure, which are provably computationally inefficient (e.g., see \citet*{hazan_computational_2016}). No-regret algorithms do not even exist for the online learning of some finite VC dimension classes \cite{littlestone_learning_1987}, meaning there are no known implementations of Algorithm \ref{alg:theoryinner} for the experiments considered in the paper.
In situations where one can reasonably expect gradient descent self-play to behave like a no-regret algorithm, Algorithm \ref{alg:blackboxinner} can still be seen as an implementation of Algorithm \ref{alg:theoryinner}.

\begin{algorithm}[ht]
\caption{Adversarially robust dynamic mechanism design.}
\label{alg:dynamicmechanism}
\begin{algorithmic}
\STATE \textbf{Output:} Learned parameters, $\theta_0$, for a mechanism represented as agent $0$.
\STATE \textbf{Input:}  Reward slack $\epsilon$, learning rate $\alpha_{\lambda}$, batch size $\beta$, initial agent parameters $\bm{\theta} = [\theta_1, \dots, \theta_\numplayers]$.
\STATE \textbf{Input:} Number of rounds $n_{\text{rounds}}$, evaluation batches $n_{\text{eval}}$, training batches $n_{\text{train}}$.
\FOR{$j = 1, \dots, n_{\text{rounds}}$}
    \STATE Copy $\bm{\theta}$ into placeholders $\bm{\tilde{\theta}} \leftarrow \bm{\theta}$.
    \FOR{$j = 1, \dots, n_{\text{eval}}$}
        \STATE Accumulate $\beta$ timesteps under $\bm{\tilde{\theta}}$, $\theta_0$.
        \STATE For each agent $i \geq 1$, store their experience in a tuple of lists,
        \begin{align*}
            B_{i,j} := (\text{Rew}_{i,j}, \text{Obs}_{i,j}, \text{Actions}_{i,j}). \nonumber
        \end{align*}
        \STATE Update each agent $i \geq 1$ critic (if used) and actor for $\bm{\tilde{\theta}}$ with $B_i$.
    \ENDFOR
    \STATE Update each agent $i \geq 1$ multiplier:
    \begin{align*}
        \lambda_i \leftarrow \lambda_i - \alpha_{\lambda} \text{Mean} \brck{\text{Rew}_{i,n_{\text{rounds}}} - \epsilon - \text{Rew}_{i,0}} \nonumber
    \end{align*}
    \FOR{$j = 1, \dots, n_{\text{train}}$}
        \STATE Accumulate $\beta$ timesteps under $\bm{{\theta}}$ and $\theta_0$ and store them in $B_{0,j},\dots, B_{\numplayers,j}$.
        \STATE Update the mechanism's $\theta_0$ critic and actor with $B_0$ using a slow learning rate.
        \STATE Update each agent $i \geq 1$ critic for $\bm{\theta}$ (if used) with $B_i$.
        \STATE Update each agent $i \geq$ actor for $\bm{\theta}$with,
        \begin{align*}
            \tilde{B}_{i,j} = \left(\tilde{\text{Rew}}_{i,j}, \text{Obs}_{i,j}, \text{Actions}_{i,j}\right) \nonumber
        \end{align*}
        where the recorded reward of agent $i$, at each timestep $t=1,\dots,\beta$, is modified:
        \begin{align*}
            \tilde{\text{Rew}}_{i,j} = \left[\frac{\lambda_i \text{Rew}_{i,j}^{(t)} - \text{Rew}_{0,j}^{(t)}}{1 + \lambda_i} \mid t=1,\dots, \beta\right] \nonumber
        \end{align*}
    \ENDFOR
\ENDFOR
\STATE Return parameters $\theta_0$.
\end{algorithmic}
\end{algorithm}

\end{document}